\def\eqref#1{equation~\ref{#1}}
\def\1{\bm{1}}
\def\vzero{{\bm{0}}}
\def\va{{\bm{a}}}
\def\ve{{\bm{e}}}
\def\vh{{\bm{h}}}
\def\vm{{\bm{m}}}
\def\vq{{\bm{q}}}
\def\vs{{\bm{s}}}
\def\vv{{\bm{v}}}
\def\vx{{\bm{x}}}
\def\vy{{\bm{y}}}
\def\mA{{\bm{A}}}
\def\mB{{\bm{B}}}
\def\mC{{\bm{C}}}
\def\mE{{\bm{E}}}
\def\mF{{\bm{F}}}
\def\mG{{\bm{G}}}
\def\mH{{\bm{H}}}
\def\mI{{\bm{I}}}
\def\mM{{\bm{M}}}
\def\mN{{\bm{N}}}
\def\mO{{\bm{O}}}
\def\mP{{\bm{P}}}
\def\mQ{{\bm{Q}}}
\def\mR{{\bm{R}}}
\def\mS{{\bm{S}}}
\def\mU{{\bm{U}}}
\def\mV{{\bm{V}}}
\def\mX{{\bm{X}}}
\def\mZ{{\bm{Z}}}
\def\mPhi{{\bm{\Phi}}}
\DeclareMathAlphabet{\mathsfit}{\encodingdefault}{\sfdefault}{m}{sl}
\SetMathAlphabet{\mathsfit}{bold}{\encodingdefault}{\sfdefault}{bx}{n}
\def\gL{{\mathcal{L}}}
\def\sN{{\mathbb{N}}}
\newcommand{\E}{\mathbb{E}}
\newcommand{\R}{\mathbb{R}}
\DeclareMathOperator*{\argmin}{arg\,min}
\newcommand{\mPi}{\bm{\Pi}}
\newcommand{\mGamma}{\bm{\Gamma}}
\newcommand{\av}[1]{\langle #1 \rangle}
\DeclareMathOperator{\diag}{diag}
\DeclareMathOperator{\St}{St}
\DeclareMathOperator{\vspan}{span}
\newcommand{\Manif}{\mathcal{M}}
\DeclareMathOperator{\trace}{trace}
\DeclareMathOperator{\skewmat}{skew}
\DeclareMathOperator{\symmat}{sym}
\newcommand{\Stiefel}[2]{\St({#1}, {#2})}
\newcommand{\tanspace}[2]{\mathrm{T}_{#1}{#2}}
\newcommand{\differential}[2]{\mathrm{d}{#1}_{#2}}
\definecolor{mydarkblue}{rgb}{0,0.08,0.45}
\setlist[enumerate]{leftmargin=22pt}
\setlist[itemize]{leftmargin=22pt}
\newcommand{\name}{Stiefel Flow Matching}
\newcommand{\kreedxl}{KREED-XL}
\newtheorem{theorem}{Theorem}
\newtheorem{lemma}[theorem]{Lemma}
\title{Stiefel Flow Matching for \\ Moment-Constrained Structure Elucidation}
\author{Austin H. Cheng$^{1,2}$\thanks{Equal contribution. \texttt{<austin@cs.toronto.edu, alston.lo@mail.utoronto.ca>}} \ \ Alston Lo$^{1,2*}$ \ \ Kin Long Kelvin Lee$^{3}$ \ \ Santiago Miret$^{3}$ \\ \textbf{Alán Aspuru-Guzik}$^{1,2,4}$ \\
    $^1$University of Toronto  \ \
    $^2$Vector Institute \ \
    $^3$Intel Labs  \ \
    $^4$Acceleration Consortium \\
    \url{https://github.com/aspuru-guzik-group/stiefelFM}
}
\begin{document}

\maketitle

\begin{abstract}
Molecular structure elucidation is a fundamental step in understanding chemical phenomena, with applications in identifying molecules in natural products, lab syntheses, forensic samples, and the interstellar medium.
We consider the task of predicting a molecule's all-atom 3D structure given only its molecular formula and moments of inertia, motivated by the ability of rotational spectroscopy to measure these moments.
While existing generative models can conditionally sample 3D structures with approximately correct moments, this soft conditioning fails to leverage the many digits of precision afforded by experimental rotational spectroscopy.
To address this, we first show that the space of $n$-atom point clouds with a fixed set of moments of inertia is embedded in the Stiefel manifold $\Stiefel{n}{4}$.
We then propose \emph{Stiefel Flow Matching} as a generative model for elucidating 3D structure under exact moment constraints.
Additionally, we learn simpler and shorter flows by finding approximate solutions for equivariant optimal transport on the Stiefel manifold.
Empirically, enforcing exact moment constraints allows Stiefel Flow Matching to achieve higher success rates and faster sampling than Euclidean diffusion models, even on high-dimensional manifolds corresponding to large molecules in the GEOM dataset.
\end{abstract}

\section{Introduction}

Elucidating the structure of unknown molecules is a central task in chemistry, important for analyzing environmental samples \citep{moneta2023untargeted}, identifying novel drugs \citep{sonstromRapidEnantiomericExcess2023}, and determining potential building blocks of life in the interstellar medium \citep{mcguireDiscoveryInterstellarChiral2016}.
The challenge is to aggregate information from multiple sources of analytical data to unambiguously determine a molecule's structure.
Rotational spectroscopy holds a unique capacity to provide precise measurements of a molecule's rotational constants, which are closely related to its moments of inertia.
In turn, the connection between these moments and 3D structure has routinely provided the highest quality gas-phase 3D structures attainable from experiment \citep{domingosAssessingPerformanceRotational2020}.
Typically, structure elucidation with rotational spectroscopy proceeds by confirming whether a \emph{known} structure's moments match with experiment \citep{lee_study_2019,mccarthy_exhaustive_2020}.
However, this approach is inherently restricted to molecules whose structures have already been catalogued, and leaves no prescription for \emph{undiscovered} molecules such as novel natural products and key reactive intermediate species that cannot be easily isolated \citep{womack2015observation}.

To overcome this limitation, we apply generative modeling to infer candidate 3D structures based on the moments and molecular formula \emph{alone}.
By themselves, moments of inertia are a 3-number summary of how a molecule's mass is distributed in space.
Going from moments to 3D structure ($3\rightarrow3n$ values) is therefore a severely underconstrained inverse problem.
Nevertheless, deep generative models such as diffusion \citep{ho2020denoising, song2020score} and flow matching \citep{lipman2023flow,liu2022flow} have shown promise in solving various inverse problems \citep{song2022solving, chung2023diffusion, song2023pseudoinverseguided}.
Indeed, previous work has applied Euclidean diffusion models to sample 3D structures conditioned on a given set of moments of inertia \citep{cheng2024determining}.

However, analytic formulas for the moments of inertia are sufficiently simple that we can define the feasible space where these constraints are always exactly satisfied.
Such precise adherence to moments can potentially leverage the many digits of precision provided by rotational spectroscopy \citep{shipman2011structure} in order to constrain the space of plausible structures.
We first show that $\Manif$, the set of $n$-atom point clouds with fixed moments of inertia, is embedded in the Stiefel manifold $\Stiefel{n}{4}$.
We then propose \emph{Stiefel Flow Matching} as a generative model on the Stiefel manifold for solving the moment-constrained structure elucidation problem (\Cref{fig:overview}).
Our approach augments Riemannian flow matching \citep{chen2024flow} with equivariant optimal transport \citep{klein2023equivariant, song2023equivariant}, which simplifies and shortens generation paths.

\begin{figure}
    \centering
    \includegraphics[width=\textwidth]{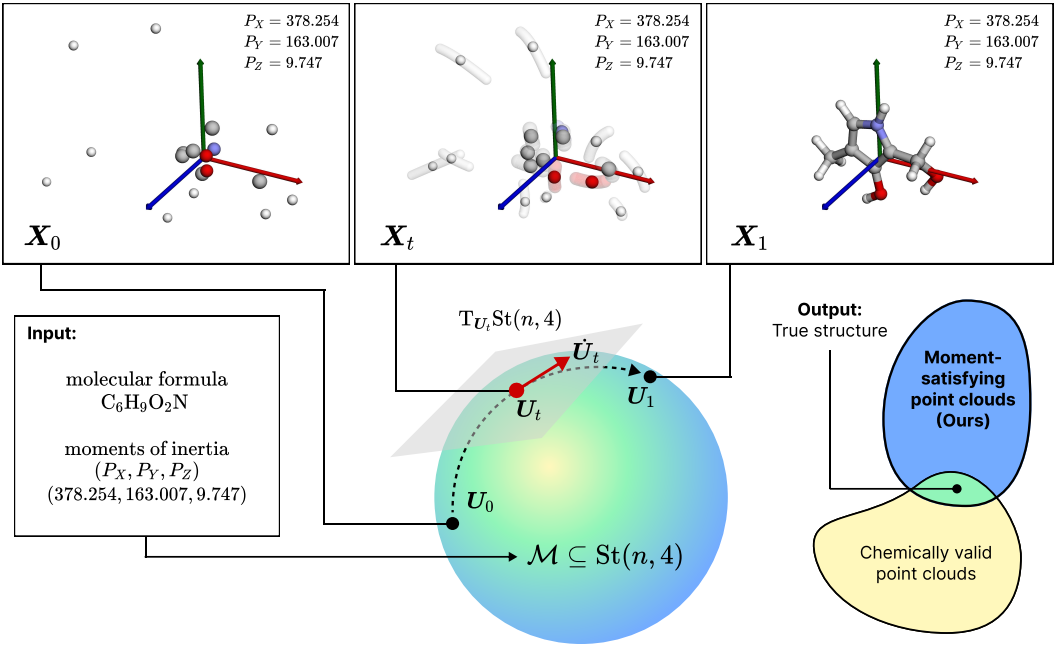}
    \caption{Stiefel Flow Matching learns to elucidate 3D molecular structure from moments and molecular formula alone by transforming uniform Stiefel noise $\mX_0$ into \emph{valid} molecular structures $\mX_1$. Generative modelling on the Stiefel manifold $\Stiefel{n}{4}$ guarantees that samples always have the correct moments of inertia, which allows the network to focus only on generating chemically stable structures. Within the intersection of these spaces lies the true 3D structure.}
    \label{fig:overview}
\end{figure}

Concretely, our contributions are:
\begin{enumerate}
    \item We propose the task of \emph{moment-constrained structure elucidation} as a challenging generative modelling problem on the Stiefel manifold.
    \item To solve this problem, we present \emph{Stiefel Flow Matching}, a Riemannian flow matching approach.
    Furthermore, we formulate an objective for equivariant optimal transport on the Stiefel manifold, which obtains shorter and simpler flows.
    \item Stiefel Flow Matching predicts 3D structure with greater success rate and lower cost than Euclidean diffusion models on both the QM9 \citep{ramakrishnan2014quantum} and GEOM \citep{axelrod2022geom} datasets.

\end{enumerate}

\section{Background and Approach}

We consider a 3D molecule as a point cloud of $n$ atoms with atomic numbers $\va \in \sN^n$ and 3D coordinates $\mX = (x_i, y_i, z_i)_{i=1}^N \in \R^{n \times 3}$.
We also refer to $\mX$ as the molecule's \emph{3D structure}.
Molecules have translational, rotational, and permutation symmetry.
However, as $\va$ and $\mX$ are stored on a computer, they necessarily have a node ordering and orientation.
Therefore, when we refer to \emph{structure}, we really mean the equivalence class containing $\mX$ under these symmetries.

\subsection{Problem statement}

We are given a molecule's molecular formula and moments of inertia, and wish to infer the 3D structure of the molecule.
The molecular formula provides us with the number of atoms $n$, atomic numbers $\va \in \sN^n$, and atomic masses $\vm \in \R^n$.
The moments of inertia\footnote{This overview is a slight simplification of rotational spectroscopy. Full details are outlined in Appendix \ref{app:rotspec}.} $(P_X, P_Y, P_Z)$ are three nonnegative numbers that summarize the mass distribution of the point cloud.
They are defined as the eigenvalues of the \textit{planar dyadic} $\mP \in \R^{3 \times 3}$ \citep{kraitchman1953determination}, calculated from masses $\vm$ and the 3D atomic coordinates $\mX = (x_i, y_i, z_i)_{i=1}^N \in \R^{n \times 3}$ as
\begin{equation}
\mP = \mX^\top (\diag{\vm}) \mX = 
\sum_{i = 1}^n \begin{pmatrix}
m_i x_i^2  &  m_i x_iy_i & m_i x_iz_i \\
m_i y_ix_i &  m_i y_i^2& m_i y_iz_i \\
m_i z_ix_i & m_i z_iy_i & m_i z_i^2 \\
\end{pmatrix}.
\end{equation}

This definition assumes a coordinate system whose origin is the weighted center of mass of the molecule, which gives the constraint $\vm^\top \mX = \vzero$.
We assume that $P_X > P_Y > P_Z > 0$, which eliminates rare edge cases (Appendix \ref{app:edge-cases}).
Note that $\mP$ is symmetric and positive semidefinite and can be thought of as the mass-weighted covariance matrix of the point cloud, as in principal component analysis (PCA).

Diagonalizing $\mP$ yields three eigenvectors, referred to as the \emph{principal axes of rotation}, which orient a molecule in a canonical representation up to sign-flips. The principal axes correspond to directions which ``explain'' the most molecular mass, in analogy to PCA. The \textit{principal axis system} is then the coordinate system whose origin is the center-of-mass and whose axes are the principal axes of rotation. We fix our coordinate system to be the principal axis system \emph{by construction}, which gives the following constraints on $\mX$:
\begin{equation}
\label{eq:constraints}
\begin{matrix}
    P_X = \sum\limits_{i=1}^n m_i x_i^2,  &  0 = \sum\limits_{i=1}^n m_i y_i z_i,  &  0 = \sum\limits_{i=1}^n m_i x_i, \\
    P_Y = \sum\limits_{i=1}^n m_i y_i^2,  &  0 = \sum\limits_{i=1}^n m_i x_i z_i,  &  0 = \sum\limits_{i=1}^n m_i y_i, \\
    P_Z = \sum\limits_{i=1}^n m_i z_i^2, &  0 = \sum\limits_{i=1}^n m_i x_i y_i,  &  0 = \sum\limits_{i=1}^n m_i z_i. \\
\end{matrix}
\end{equation}
These constraints also canonicalize a 3D structure up to sign-flips of the axes (e.g. $x \mapsto-x$).
The center-of-mass constraint removes translational degrees of freedom, while the off-diagonal constraints remove rotational degrees of freedom.

The goal of molecular identification from moments and molecular formula is to find all molecular structures $\mX$ which are consistent with these constraints \emph{and} are thermodynamically \textit{stable}, i.e., local minima of the potential energy surface.
Structures which satisfy these criteria can then be compared to experimental measurements from rotational spectroscopy (Appendix \ref{sec:experimental}).

\subsection{The feasible space of moment-constrained structures}

The \textit{Stiefel manifold} is the set of orthonormal $n \times p$ matrices, defined as
\begin{equation}
\Stiefel{n}{p} = \{\mU \in \R^{n \times p} \mid \mU^\top \mU = \mI_p\},
\end{equation}
where $\mI_p$ is the $p \times p$ identity matrix, and $n \geq p$.
An element of $\Stiefel{n}{p}$ can be thought of as the first $p$ columns of some $n$-dimensional (improper) rotation matrix, or can be thought of as a collection of $p$ orthonormal $n$-dimensional vectors. We provide more background in Appendix \ref{sec:stiefel-manifold}.

Moment-constrained structures $\mX$ can be mapped into $\Stiefel{n}{4}$ by scaling rows by masses and columns by moments. Letting $M = \sum_{i=1}^n m_i$ be the total mass, consider the following construction:
\begin{equation}
\label{eq:U}
\mU =
\begin{pmatrix}
\sqrt{\frac{m_1}{P_X}}{x_1} & \sqrt{\frac{m_1}{P_Y}}{y_1} & \sqrt{\frac{m_1}{P_Z}}{z_1} & \sqrt{\frac{m_1}{M}}\\
\vdots & \vdots & \vdots & \vdots\\
\sqrt{\frac{m_n}{P_X}}{x_n} & \sqrt{\frac{m_n}{P_Y}}{y_n} & \sqrt{\frac{m_n}{P_Z}}{z_n} & \sqrt{\frac{m_n}{M}}\\
\end{pmatrix}
\in \mathbb{R}^{n\times 4}.
\end{equation}
It can be verified by inspection and comparison to the \autoref{eq:constraints} constraints that the columns of $\mU$ are orthonormal. That is, $\mU \in \Stiefel{n}{4}$. 
However, we cannot yet freely convert between $\mX$ and $\mU$: the last column of $\mU$ is not free as it must equal the unit mass vector $\hat{\vm} = \sqrt{\vm/M}$ to satisfy the zero center-of-mass constraint.
To convert an arbitrary $\mU \in \Stiefel{n}{4}$ to an $\mX$ satisfying all constraints, we first apply a rigid $n$-dimensional rotation $\mR$ to $\mU$ so that its last column is aligned to $\hat{\vm}$ (Appendix \ref{app:rodrigues}), before finally unscaling the rows and columns of $\mU$.

The feasible space $\Manif$ of moment-constrained structures is therefore the subset of $\Stiefel{n}{4}$ whose last column is fixed to $\hat{\vm}$, i.e., 
\begin{equation}\label{eq:manif}
\Manif = \{\mU \in \Stiefel{n}{4} \mid  \mU_{\cdot, 4} = \hat{\vm} \}.
\end{equation}
In fact, we show in Appendix \ref{app:zerocom} that $\Manif$ is a totally geodesic submanifold of $\Stiefel{n}{4}$, which means that shortest paths between points in $\Manif$ stay in $\Manif$. The first three columns of elements in $\Manif$ form the intersection between $\Stiefel{n}{3}$ and the orthogonal complement to $\vspan(\hat{\vm})$, which is in turn equivalent to $\Stiefel{n - 1}{3}$. Hence, the dimension of $\Manif$ is $3n - 9$, which corresponds to removing 3 translational, 3 rotational, and 3 moment degrees of freedom, consistent with the 9 constraints in \autoref{eq:constraints}.
Going forward, we assume that the molecule of interest contains $n\geq5$ atoms, so that we always deal with Stiefel manifolds of strictly rectangular matrices.

\subsection{Navigating the Stiefel Manifold}

The Stiefel manifold provides rich structure for navigating the feasible space of molecular structures.
As a \textit{manifold}, it is locally Euclidean but globally curved.
This means that every point $\mU \in \Stiefel{n}{p}$ is attached a vector space called its \textit{tangent space} $\tanspace{\mU}{\Stiefel{n}{p}}$.
For the Stiefel manifold, these tangent spaces are given as
\begin{equation}
\tanspace{\mU}{\Stiefel{n}{p}} = \{\Delta \in \R^{n \times p} \mid  \mU^\top\Delta +\Delta^\top\mU = \vzero \}.
\end{equation}
Then, equipping every tangent space with an inner product $\langle \cdot , \cdot \rangle_\mU$ turns $\Stiefel{n}{p}$ into a \textit{Riemannian manifold}, giving rise to notions of angles and distances.
The collection of inner products for each tangent space is called the \textit{Riemannian metric}.
One such metric for the Stiefel manifold is the canonical metric \citep{edelman1998geometry},
\begin{equation}
\label{eq:canonical-metric}
\langle\Delta, \tilde{\Delta}\rangle_\mU = \trace \Delta^\top \left(\mI_n - \tfrac{1}{2}\mU\mU^\top\right) \tilde{\Delta},
\end{equation}
which we exclusively use for this work.
The canonical metric induces a norm $|| \Delta ||_\mU = \sqrt{\langle \Delta , \Delta \rangle_\mU}$ on each tangent space, which gives the length of a curve $\gamma \colon [0,1] \rightarrow \Stiefel{n}{p}$ as $L(\gamma) = \int_0^1 ||\dot{\gamma}(t)||_{\gamma(t)} dt$.
Curves that are locally length-minimizing are called \textit{geodesics}, providing a notion of ``straight lines'' for efficiently navigating around the manifold.
Geodesics are defined by their starting point and initial velocity.
Indeed, the exponential map $\exp_\mU(\Delta)$ takes in a starting point $\mU \in \Stiefel{n}{p}$ and an initial velocity $\Delta \in \tanspace{\mU}{\Stiefel{n}{p}}$, and outputs the final manifold point after following this geodesic for unit time.
The exponential map is locally invertible, which gives the existence of the logarithmic map $\log_\mU (\tilde{\mU})$.
The logarithmic map takes in a starting point $\mU$ and a target point $\tilde{\mU}$, and outputs the tangent vector needed to travel from $\mU$ to $\tilde{\mU}$.
Algorithms for computing exponential and logarithmic maps under the canonical metric for the Stiefel manifold are given in \Cref{app:expandlog}.

\section{Stiefel Flow Matching}

Having shown that the feasible space of moment-constrained structures is a Stiefel manifold, we can now formulate the problem of moment-constrained structure elucidation as an \emph{unconstrained generative modeling problem on the Stiefel manifold}.
An attractive approach to this is flow matching (FM), which trains a network as a time-dependent velocity field $u_t$ that transforms samples from a prior noise distribution $x_0 \sim p_0(x_0)$ into samples which approximately match the data distribution $x_1 \sim p_1(x_1) \approx p_{\textrm{data}}(x_1)$ \citep{lipman2023flow}.
The integration of the velocity field over time is then a \textit{continuous normalizing flow} $\psi_t$, which generates marginal probability densities $p_t$ over time by the pushforward operation $p_t = [\psi_t]_\#(p_0)$.
In practice, this is realized by sampling initial conditions $x_0 \sim p_0(x_0)$ and evolving them from time $0$ to $t$ according to the ODE $\frac{dx}{dt} = u_t(x)$ (Appendix \ref{subsec:sampling}).
The goal of training is to approximate this $u_t$ using a neural network $v_\theta(t, x)$ parameterized by $\theta$.

Flow matching is readily generalized to distributions on Riemannian manifolds \citep{chen2024flow}.
When closed-form geodesics are available, Riemannian flow matching provides a simulation-free training objective for learning $u_t$, called Riemannian conditional flow matching,
\begin{equation}
\label{eq:RCFM}
    \gL_{\textrm{RCFM}}(\theta) = \E_{t\sim U(0,1),\,p(\mU_0),\,p(\mU_1)}\left[||v_\theta(t, \mU_t) - \dot{\mU_t}||^2_{\mU_t}\right],
\end{equation}
Intuitively, this loss trains the model to interpolate between many pairs of (noise, data). To compute this loss, we require (1) sampling $\mU_0 \sim p(\mU_0)$ from a prior noise distribution, (2) geodesic interpolation between $\mU_0$ and $\mU_1$ to get $\mU_t$, (3) computing the time derivative of the interpolant $\dot{\mU_t}$, and (4) evaluating the norm $||\cdot||_{\mU_t}$.

To sample uniformly from the feasible space $\Manif$, we can sample $\mU$ uniformly from $\Stiefel{n}{4}$ (Appendix \ref{app:sampling}) and then rigidly rotate $\mU$ so that its last column aligns with the unit mass vector $\hat{\vm}$ (Appendix \ref{app:rodrigues}).
Then, geodesics can be computed from the exponential \citep{edelman1998geometry} and logarithmic \citep{zimmermann2022computing} maps, which are efficient to compute for $\Stiefel{n}{4}$,
\begin{equation}
\label{eq:geodesic}
    \mU_t = \exp_{\mU_0}(t \log_{\mU_0}(\mU_1)),
\end{equation}
making our training objective simulation-free.
Once we have the interpolant $\mU_t$, which will be the input to the neural network, we can then calculate the network's target $\dot{\mU_t}$.
Instead of autodifferentiation, which introduces unnecessary overhead,  we compute $\dot{\mU_t}$ using the logarithmic map,
\begin{equation}
    \dot{\mU_t} = \frac{1}{1-t}\log_{\mU_t}(\mU_1),
\end{equation}
observing that $\dot{\mU_t}$ is a unit length tangent vector along the geodesic from $\mU_t$ to $\mU_1$.
Finally, we compute the norm of $v_\theta(t, \mU_t) - \dot{\mU_t}$  following Appendix \ref{app:canmetric}.
Additionally, Appendix \Cref{thm:log3} shows how we can compute the logarithm in $\Stiefel{n}{3}$, rather than $\Stiefel{n}{4}$, which slightly saves time.

\subsection{Reflection and Permutation Equivariance}

As mentioned earlier, we set the coordinate axes as the principal axes of rotation by construction.
This canonicalizes the 3D structure, removing translational symmetries and reducing the rotational symmetries to sign-flip symmetries of the eigenvectors of $\mP$, which are the coordinate axes \citep{puny2022frame, duval2023faenet}.
Hence, the flow needs to be equivariant with respect to sign-flips of the coordinate axes, which we call ``reflection-equivariance'' for brevity \citep{lim2023expressive, cheng2024determining}.
In addition, because 3D structure is invariant under node order permutations, the learned velocity should be equivariant to permutations.
Together, the explicit equivariance constraints on the network are given as $v_\theta(t, \mPi\mU_t\mR) = \mPi v_\theta(t, \mU_t)\mR$, for all node permutations $\mPi$ and reflections $\mR = \diag(\pm1, \pm1, \pm1,  1)$.
We satisfy these constraints with a reflection-equivariant graph neural network architecture described in Appendix \ref{app:architecture}.

\subsection{Equivariant Optimal Transport} \label{subsec:opttransport}

Flow matching learns a velocity field which transports samples from the noise distribution to the data distribution, but there is no guarantee that samples will follow paths $\gamma$ that are optimal with respect to transport cost $L(\gamma)$.
Optimal paths are desired because they afford more efficient training and faster generation \citep{pmlr-v202-pooladian23a, tong2024improving}.
Since molecules have permutation and rotational symmetries, optimal paths should connect structures whose equivalence classes are close to each other.
This corresponds to finding optimal node permutations and rotations to align noise samples $\mX_0$ to data samples $\mX_1$.
In contrast to the Euclidean case \citep{klein2023equivariant, song2023equivariant}, only reflections are needed, not rotations, because the coordinate axes are already fixed in place as the principal axes of rotation by \autoref{eq:constraints}.

In addition, transport cost on $\Stiefel{n}{4}$ must be measured using the Riemannian distance.
Thus, the optimal transport map from $\mU_0$ to $\mU_1$ minimizes the following cost over atom-type-preserving node permutations $\mPi$ (i.e., $\mPi \va = \va$) and reflections $\mR = \diag(\pm1, \pm1, \pm1,  1)$:
\begin{equation}
\label{eq:OT-cost}
d(\mU_0, \mU_1) = ||\log_{\mPi\mU_0\mR} (\mU_1)||_{\mPi\mU_0\mR}.
\end{equation}
While searching for optimal alignments, we do not compute the full Stiefel logarithm and instead approximately calculate distance using only one iteration of the inner loop described in Appendix \Cref{alg:stiefellog}, which we justify in \Cref{sec:empirical-log}.
This approximate distance is then heuristically optimized over atom permutations and reflections with a greedy random local search (\Cref{app:assgnalg}). 
Appendix \Cref{alg:alignment} outlines this procedure, which samples several permutations for each reflection to approximately identify the best reflection, and then refines the permutations by a local search of random atom type-preserving index swaps (Appendix \Cref{fig:equivariant-OT}).

\begin{table}[t]
\caption{Experimental results on QM9. Stiefel FM shows no violation of moment constraints as shown in the \emph{Error} metrics, and has the highest success rate for structure elucidation, with the lowest computational cost.}
\label{tab:qm9}
\centering
\makebox[\textwidth][c]{
\begin{tabular}{l|cc|cccc|c}
\toprule

\multirow{2}{*}{Method} & \multicolumn{2}{c|}{\% $<$ RMSD $\uparrow$} & \multirow{2}{*}{Error $\downarrow$} & \multirow{2}{*}{Valid $\uparrow$} & \multirow{2}{*}{Stable $\downarrow$} & \multirow{2}{*}{Diverse $\uparrow$} & \multirow{2}{*}{NFE $\downarrow$} \\
& 0.25\,Å & 0.10\,Å & & & & \\
\midrule
Stiefel Random & \hfill 0.00$\pm$0.00 & \hfill 0.00$\pm$0.00 & 0.00 & 0.061 & nan & 2.640 & 0 \\
\midrule
KREED & \hfill 11.22$\pm$0.28 & \hfill 9.55$\pm$0.26 & 5.18 & 0.878 & $-$1.335 & 1.429 & 1000 \\
\kreedxl{} & \hfill 13.65$\pm$0.30 & \hfill 10.94$\pm$0.27 & 3.64 & 0.933 & $-$1.048 & 0.870 & 1000 \\
\kreedxl{}-DPS & \hfill 12.36$\pm$0.29 & \hfill 9.40$\pm$0.26 & 1.33 & 0.744 & $-$0.826 & 1.060 & 1000 \\
\kreedxl{}-proj & \hfill 13.67$\pm$0.30 & \hfill 10.93$\pm$0.27 & 0.00 & 0.924 & $-$0.905 & 0.871 & 1000 \\
\midrule
Stiefel FM & \hfill \textbf{15.17$\pm$0.31} & \hfill \textbf{13.82$\pm$0.30} & 0.00 & 0.882 & $-$1.125 & 1.040 & 200 \\
Stiefel FM-OT & \hfill 13.99$\pm$0.30 & \hfill 12.68$\pm$0.29 & 0.00 & 0.835 & $-$1.039 & 1.045 & 200 \\

\bottomrule
\end{tabular}
}
\end{table}

\begin{figure}[t]
\centering
 \includegraphics[width=\textwidth]{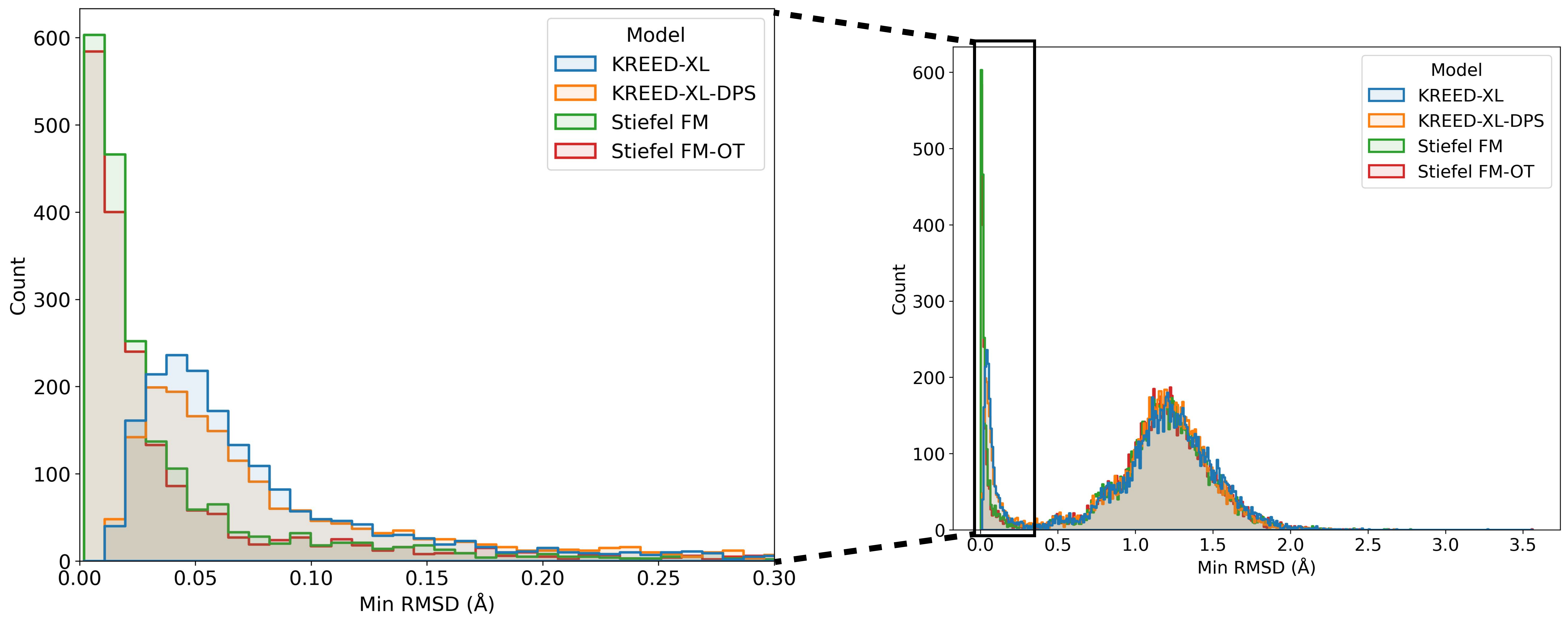}
\caption{Histograms of minimum RMSD for predicted QM9 examples show two distinct clusters for RMSD. The 0.25Å threshold captures molecular structures that are useful for structure elucidation.}
\label{fig:qm9-histogram}
\end{figure}

\section{Experiments}
We evaluate Euclidean diffusion models and Stiefel Flow Matching on the QM9 and GEOM datasets. For each example, the model takes in moments and molecular formula and produces $K=10$ samples.

\textbf{Datasets.} For QM9 \citep{ramakrishnan2014quantum}, we use the conformers provided by the GEOM dataset. We abbreviate GEOM-Drugs \citep{axelrod2022geom} as GEOM. We use the same training, validation, and test splits as \citet{cheng2024determining}, except we remove examples that are unstable, have less than $5$ atoms, or have exactly zero-valued moments, which drops $3122$ examples from QM9 and drops no examples from GEOM.
QM9 has train/val/test splits of 104265/13056/13033 molecules, while GEOM has splits of 233625/29203/29203 molecules, or 5537598/29203/29203 conformers. We only predict 3D structures for the lowest-energy conformers of GEOM.
This reflects experimental reality, as the lowest-energy conformer typically has the highest proportion in the population after cooling by supersonic jet expansion \citep{ruoff1990relaxation}.

\textbf{Structure elucidation.}
The only measure of a model's success is its ability to generate the correct 3D structure $\mX$ with high accuracy at least once, where accuracy is measured by root-mean-squared-deviation (RMSD) of the predicted coordinates to the ground truth.
High accuracy is needed because the only confirming evidence available to us is (1) agreement with moments and (2) thermodynamic stability by quantum chemistry.
For this reason, we use stringent thresholds of {RMSD $<$ 0.25\,Å} and {RMSD $<$ 0.10\,Å} to ensure that the predicted structure is in the same potential energy basin as the true structure.
Success is reported if the minimum RMSD over $K=10$ generated samples satisfies these thresholds.
The success rate is the percentage of the test set whose generated samples has a \emph{minimum} RMSD which satisfies each threshold.
Error bars are standard errors of the mean.

Evaluating RMSD is nontrivial because the generated and ground truth structure must first be aligned under same-atom-type permutations and reflections.
We use the same RMSD procedure as \cite{cheng2024determining}: We first align node permutations by solving a linear assignment problem whose cost matrix is squared Euclidean distance, and repeat this for all 8 reflections, taking the minimum. Then we compute RMSD between the ground truth coordinates and the aligned coordinates.
This is similar to the alignment procedure used in \citep{klein2023equivariant, song2023equivariant}.

\textbf{Auxiliary metrics.}
To characterize the samples generated by each model, we report additional metrics.
In contrast to success rate, which checks the minimum RMSD of generated samples for each example, these auxiliary metrics are averaged over \emph{all} generated samples (except diversity).
\emph{Error} measures how much the generated structure violates the moment constraints. If $\hat{\mP}$ is the computed planar dyadic of the generated structure, this is computed as $\frac{1}{\sqrt{6}} ||\textrm{triu}(\hat{\mP} - \mP)||_2$, where $\textrm{triu}$ takes the upper triangular part of the matrix (contains 6 elements).
\emph{Validity} is a heuristic check based on bond detection with \texttt{rdDetermineBonds.DetermineConnectivity} \citep{landrum2013rdkit, kim2015universal}.
\emph{Stability} is the log norm of the gradient of energy with respect to coordinates, as reported by the \texttt{xtb} quantum chemistry program \citep{bannwarth2019gfn2}. Stable structures should have a gradient norm close to zero (log norm very negative), assuming the structure is at a local minimum and not a saddle point.
\emph{Diversity} is calculated as the average pairwise RMSD of all generated samples of a single example.
\emph{NFE} is the number of function evaluations used during generation, and measures computational cost.
We do not set boldface for these metrics because they do not correspond directly to success criteria.

\begin{figure}
    \centering
    \includegraphics[width=\textwidth]{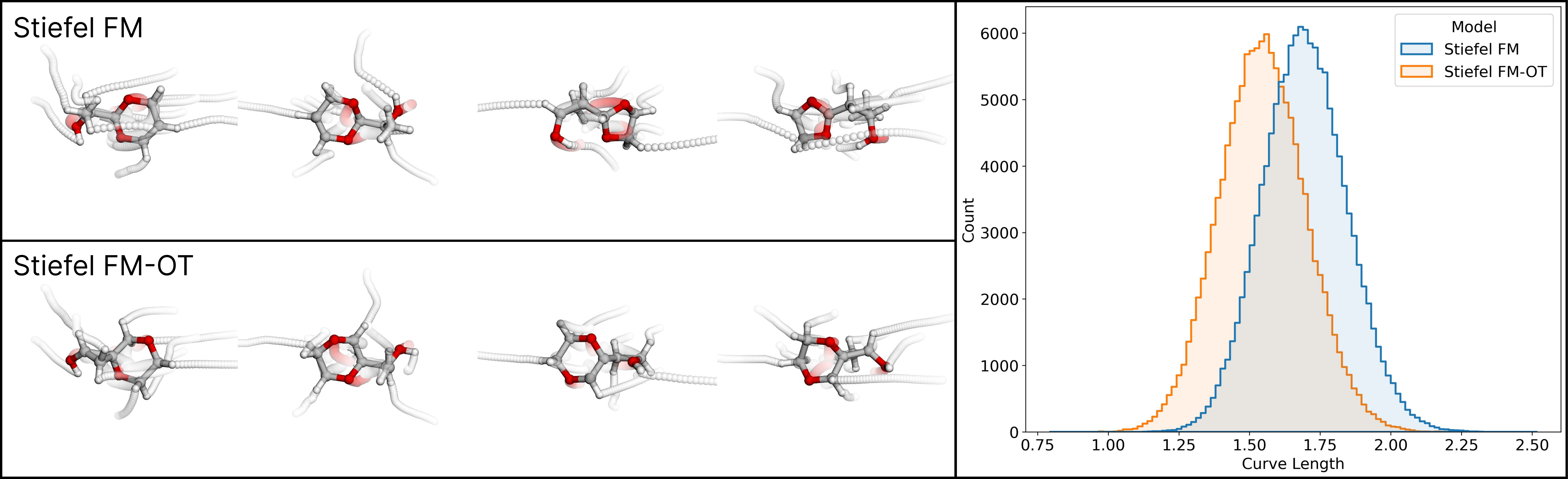}
    \caption{(\emph{Left}) Learned sampling trajectories for Stiefel FM and Stiefel FM-OT on QM9. Each column begins generation from the same noise. (\emph{Right}) Histogram of curve lengths of all QM9 sampling trajectories for Stiefel FM and Stiefel FM-OT. Permutation and reflection alignment lead to simpler and shorter paths.}
    \label{fig:pathlength}
\end{figure}

\textbf{Baselines.} Given the novelty of the problem, the number of available baselines is limited.
We compare the performance of Stiefel FM to KREED \citep{cheng2024determining}. KREED is a reflection-equivariant diffusion model trained to generate 3D structure conditioned on molecular formula and moments of inertia, and is a specialization of E(3)-equivariant approaches like EDM \citep{hoogeboom2022equivariant}.
Since our architecture for Stiefel FM is much larger (parameter-wise) than the model architecture used in KREED, and because KREED is tailored for a slightly different task, we also train another reflection-equivariant diffusion model with an identical neural network architecture to \name{}, which we label as \kreedxl{}.
The planar dyadic is computable at every step of the generation process, which means that this task can be treated as a nonlinear inverse problem: On top of \kreedxl{}, we apply Diffusion Posterior Sampling (DPS) \citep{chung2023diffusion}, which guides generation with an additional drift term for minimizing the planar dyadic error.
As a simple baseline which exactly satisfies moment constraints, we report performance for uniform random sampling on the Stiefel manifold.
We also report the results of \kreedxl{} after projecting samples onto the feasible manifold (\Cref{app:orthoproject}).
Relevant hyperparameters for all methods are provided in Appendix \ref{subsec:training}.

\textbf{Results.} \Cref{tab:qm9} shows our experimental results on QM9, reporting Stiefel FM with and without optimal transport (OT).
We note that the average number of atoms in QM9 is 18, meaning that on average the model must infer $3(18)-9=39$ values from 3 moments.
We find that Stiefel FM can generate the correct structure with a greater success rate than all Euclidean diffusion models.
We also see that incorporating the analytic formula of the moments via DPS does improve agreement with the moments, but at the cost of accuracy.
In contrast, Stiefel FM does not suffer from this tradeoff.
Projecting samples onto the manifold does not change success rate, because the projection leaves correct structures untouched, while only distorting incorrect structures.
In addition, Stiefel FM uses only 20\% of the computation used by diffusion models.
Euclidean diffusion models such as \kreedxl{} can produce more valid and stable structures, though they may not necessarily generate the correct structure.
\Cref{fig:qm9-histogram} reveals that when Stiefel FM's predictions are correct, it is likely to be extremely accurate, achieving RMSD even below 0.05 Å.
Training with equivariant optimal transport helps learn simpler generation paths and reduces the average curve length of generation trajectories from 1.696 to 1.547 (\Cref{fig:pathlength}).
Interestingly, training with optimal transport slightly reduces the success rate of Stiefel FM on QM9, though this trend is reversed for GEOM.
Appendix \Cref{tab:qm9-ablate} reports additional trials experimenting with optimal transport, logit-normal timestep sampling \citep{esser2024scaling}, and stochasticity \citep{bose2024sestochastic}.

\begin{table}[t]
\caption{Experimental results on GEOM. Stiefel FM generates few valid structures on its own due to the increased difficulty of manifold-constrained generative modelling. When adjusted to generate the same number of valid molecules, Stiefel FM-OT (filter) obtains the highest success rate, without surpassing baselines in computational cost.}
\label{tab:geom}
\centering
\makebox[\textwidth][c]{
\begin{tabular}{l|cc|cccc|c}
\toprule

\multirow{2}{*}{Method} & \multicolumn{2}{c|}{\% $<$ RMSD $\uparrow$} & \multirow{2}{*}{Error $\downarrow$} & \multirow{2}{*}{Valid $\uparrow$} & \multirow{2}{*}{Stable $\downarrow$} & \multirow{2}{*}{Diverse $\uparrow$} & \multirow{2}{*}{NFE $\downarrow$} \\
& 0.25\,Å & 0.10\,Å & & & & \\

\midrule
Stiefel Random & \hfill 0.00$\pm$0.00 & \hfill 0.00$\pm$0.00 & 0.00 & 0.000 & nan & 4.104 & 0 \\
\midrule
KREED & \hfill 0.04$\pm$0.01 & \hfill 0.02$\pm$0.01 & 58.36 & 0.353 & $-$0.583 & 2.286 & 1000 \\
KREED-XL & \hfill 3.54$\pm$0.11 & \hfill 2.02$\pm$0.08 & 30.71 & 0.907 & $-$0.900 & 2.190 & 1000 \\
KREED-XL-proj & \hfill 3.54$\pm$0.11 & \hfill 2.04$\pm$0.08 & 0.00 & 0.904 & $-$0.752 & 2.188 & 1000 \\
\midrule
Stiefel FM & \hfill 2.17$\pm$0.09 & \hfill 1.24$\pm$0.06 & 0.00 & 0.388 & $-$0.066 & 2.212 & 200 \\
Stiefel FM-OT & \hfill 2.44$\pm$0.09 & \hfill 1.49$\pm$0.07 & 0.00 & 0.376 & $-$0.002 & 2.195 & 200 \\
Stiefel FM (filter) & \hfill 3.57$\pm$0.11 & \hfill 2.06$\pm$0.08 & 0.00 & 0.889 & $-$0.437 & 2.183 & 600 \\
Stiefel FM-OT (filter) & \hfill \textbf{3.94$\pm$0.11} & \hfill \textbf{2.42$\pm$0.09} & 0.00 & 0.869 & $-$0.352 & 2.165 & 600 \\

\bottomrule
\end{tabular}
}
\end{table}

\begin{figure}[t]
\centering
 \includegraphics[width=\textwidth]{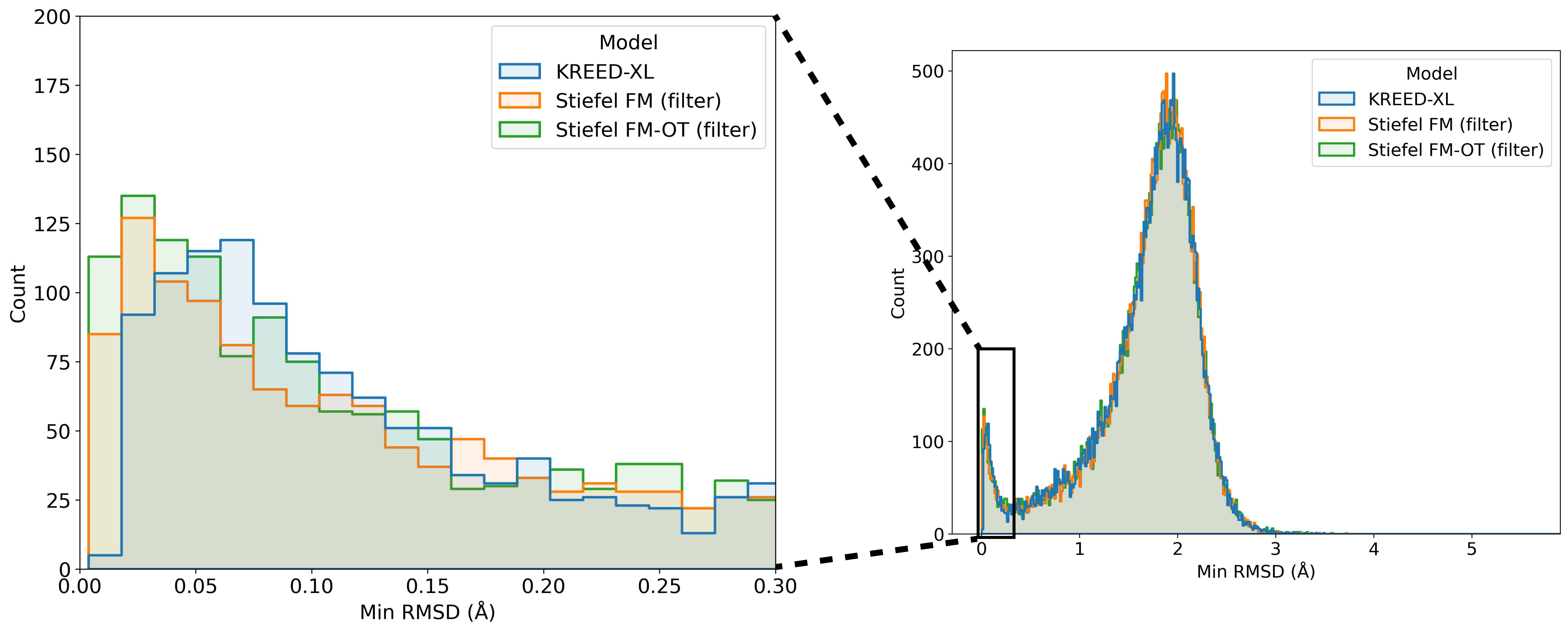}
\caption{Histograms of minimum RMSD for predicted GEOM examples.}
\label{fig:geom-histogram}
\end{figure}

The task of structure elucidation poses an even harder challenge on GEOM, with an average of 46 atoms, giving a task of inferring $3(46)-9=129$ values.
Naively, one would expect that this problem is hopelessly underconstrained.
Nevertheless, \Cref{tab:geom} shows that both diffusion models and flow matching can obtain a nontrivial success rate for structure elucidation.
However, the low accuracy, validity, and stability of Stiefel FM(-OT) suggests that the model is underfitting the dataset, even as \kreedxl{} is able to generate numerous valid samples.
We observe that less than half of the structures generated by Stiefel FM are valid, which aligns with the fact that \emph{there are fewer valid structures on the feasible manifold $\Manif$ than in regular Euclidean space}.
This suggests that a fairer evaluation utilizing the strength of the manifold constraint should generate a similar number of \emph{valid} structures for each model.
This is justified by the fact that validity does not use ground truth labels and can be computed at only nominal cost.
Therefore, we generate 30 samples for Stiefel FM and Stiefel FM-OT before filtering to retain up to $K=10$ valid samples. Note that the combined computational cost of generating 3x as many samples (3 x 200 NFE) is still lower than \kreedxl{} (1000 NFE).
After filtering, Stiefel FM-OT obtains a similar validity rate to \kreedxl{}, but obtains the highest success rate for structure elucidation.
Even after filtering, the mediocre stability of generated samples suggests that success rate can be improved further, though at the slightly higher cost of quantum chemistry calculations.
Now, we see that optimal transport does help Stiefel FM to predict accurate structures, while also reducing average curve length from 1.421 to 1.344 (Appendix \Cref{fig:geom-pathlength}).
However, biasing generation by selecting for valid samples seems to also select for longer generation trajectories.
We find that generation paths that land on the correct structure are usually longer than generation paths that land on incorrect structures (Appendix \Cref{fig:biased}).
This may be explained by the fact that initial points are sampled anywhere uniformly on the manifold, but for success they must end up on the single true structure. In contrast, there are many incorrect structures all over the manifold, which may end up on average closer to random initial points.

One should expect success rates to be of this magnitude for solving a heavily underconstrained problem. This is simply due to the fact that there exist many stable structures with the same molecular formula and very similar moments of inertia. Indeed, trained models usually generate realistic-looking molecules -- see generated examples in Appendix \Cref{fig:select-qm9} and \Cref{fig:select-geom}.
Furthermore, only 10 to 30 samples were queried for each molecule, but an actual structure elucidation campaign would have a much larger compute budget for generating thousands of samples.
As a highlight, our results show that it is actually possible at 0.25 Å resolution to elucidate 27.4\% of the test set of QM9 (3580/13033) when combining KREED-XL, Stiefel FM, and Stiefel FM-OT; and 7.9\% of the test set of GEOM (2297/29203), when combining KREED-XL, Stiefel FM (filter), and Stiefel FM-OT (filter).

\section{Related Work}

\textbf{Generative models for 3D molecules.} While generative models have been actively explored for 3D structure prediction, few works have applied these methods to the task of structure elucidation \citep{cheng2024determining}. Adjacent work in applying diffusion and flow matching models to 3D molecules include molecular generation \citep{hoogeboom2022equivariant,song2023equivariant}, conformer search \citep{jing2022torsional, xu2022geodiff}, docking \citep{corso2022diffdock}, biomolecular assembly \citep{abramson2024accurate} and Boltzmann generators \citep{klein2023equivariant}. Generative models on Riemannian manifolds have also been applied to protein design \citep{bose2024sestochastic, yim2023se} and crystal structure prediction \citep{jiao2024crystal}.
Recent work has applied generative modelling to predict 3D molecular structure from powder X-ray diffraction patterns \citep{lai2024end, riesel2024crystal} and 3D protein structure from cryo-EM density maps \citep{levy2024solving}.

\textbf{Deep learning for molecular identification using rotational spectroscopy.} A limited number of works provide other parts of the complete workflow needed for structure elucidation using rotational spectroscopy. \citet{zaleskiAutomatedAssignmentRotational2018} propose RAINet as a forward modeling approach for assigning rotational spectra, where a set of peaks is fed into a classifier, that categorizes the spectra to an appropriate multilayer perceptron that outputs spectroscopic parameters, including moments. \citet{mccarthy_molecule_2020} adopt a ``mixture-of-experts'' approach that maps spectroscopic parameters and approximate molecular formula into a set of complementary experimental observables and SMILES strings, though with limited success.
\citet{cheng2024determining} present a Euclidean diffusion model KREED for determining 3D structure from moments, molecular formula, and unsigned substitution coordinates, the latter of which is also measurable from rotational spectroscopy, but can be difficult and expensive to obtain.
Most recently, \citet{schwartingTwinsRotationalSpectroscopy2024} provide a thorough analysis into the inverse problem, examining the frequency with which different molecules have moments of inertia that are very close in value. \name{} could disambiguate these structures by providing a diversity of structures that satisfy moment constraints exactly.

\textbf{Statistics on the Stiefel manifold.}
While the Stiefel manifold is often studied in the context of optimization \citep{absil2008optimization, chen2021decentralized, kong2022momentum}, a number of works study probability distributions on the Stiefel manifold \citep{chakraborty2019statistics, chikuse1990distributions}.
One distribution on the Stiefel manifold is called the \textit{matrix von Mises-Fisher distribution} or \textit{matrix Langevin distribution} \citep{pal2020conjugate, chikuse2003concentrated, jupp1979maximum}.
\cite{wang2020particle} propose a particle filtering algorithm on the Stiefel manifold, with the first application of optimal transport on the Stiefel manifold.
\cite{yataka2023grassmann} propose a continuous normalizing flow on the Grassmann manifold, which is closely related to the Stiefel manifold.

\section{Conclusion}

We propose Stiefel Flow Matching, a Riemannian generative model for generating samples subject to exact orthogonality constraints, and apply it to the challenging inverse problem of structure elucidation from moments of inertia and molecular formula.
Empirically, Stiefel Flow Matching achieves a higher success rate than Euclidean diffusion approaches.
Satisfying the constraints exactly will enable future advances in Riemannian generative modelling to directly transfer to generating more stable molecules, without needing to consider agreement to the moments.

\subsection{Future Research Directions}

\textbf{Improving Stiefel generative models.}
Riemannian flow matching empirically shows degradation compared to Riemannian diffusion \citep{lou2024scaling, zhu2024trivialized}. This has been attributed to two pathologies of Riemannian flow matching for compact manifolds: (1) the geodesic-based velocity field is discontinuous at the cut locus \citep{lou2024scaling, zhu2024trivialized}, and (2) the probability density has a shrinking support \citep{stark2024dirichlet, holderrieth2024generator}.
These pathologies may explain the difficulty of Stiefel FM in fitting GEOM, and motivate the development of alternative probability paths for Stiefel flow matching, such as Stiefel diffusion \citep{de2022riemannian}, diffusion mixtures \citep{jo2023generative}, or flows which asymptotically land on the Stiefel manifold \citep{ablin2022fast, gao2022optimization}.
Training Stiefel diffusion with the denoising score matching objective requires the heat kernel as training targets for the neural network, but these targets are expensive to compute \citep{azangulov2022stationary, lou2024scaling}.
Alternatively, the matrix Langevin distribution may be amenable to modelling with Star-Shaped Denoising Diffusion Probabilistic Models \citep{okhotin2024star}.
Future work can also explore varying the metric used on the Stiefel manifold, since the Stiefel manifold admits a 1-parameter family of metrics generalizing the canonical metric \citep{huper2021lagrangian}. These metrics have efficient numerical algorithms for the exponential and logarithm \citep{mataigne2024efficient}.
Sample-time advances in flow matching, such as corrector sampling \citep{gat2024discrete} or enhancing the flow with a jump process \citep{holderrieth2024generator}, are also orthogonal avenues for improvement.

A key limitation of Stiefel Flow Matching is the requirement of molecular formula as input. Recent approaches in discrete flow matching \citep{campbell2024generative, gat2024discrete} could enable a multimodal flow to simultaneously vary both continuous atom positions and discrete atom types. The jump processes of generator matching \citep{holderrieth2024generator} are particularly natural for this problem, as they could allow the flow to jump between Stiefel manifolds of different sizes.

\textbf{Modeling on real-world chemical data.} Since we require only minimal information in the moments and molecular formula, another direction is to incorporate other conditioning information, such as energy and force information, fragments of 2D graphs, dipole moments, or other sources of analytical chemistry data. We can do so through MCMC sampling \citep{du2023reduce}, guidance \citep{song2023pseudoinverseguided, song2023loss, mardani2023variational}, or diversity sampling approaches \citep{corso2024particle}. This presents an opportunity in mass spectrometry, as the 3D structural information provided by the moments can distinguish molecules which have the same mass.
Controllable diversity can also be leveraged here since, in a molecule identification campaign, we can eliminate candidates once experiment confirms that they are not correct. We then want to sample structures which are different from these eliminated candidates.

\textbf{Additional applications of Stiefel Flow Matching.} Stiefel generative models can also be applied to other domains of orthogonality constrained data, such as molecular orbitals \citep{mrovec2021diagonalization, aoto2021optimisation}, orthogonal neural network weights \citep{kong2022momentum}, and covariance matrices in neural data \citep{nejatbakhsh2024estimating}. 

\newpage

\subsubsection*{Acknowledgments}
We thank Luca Thiede, Kevin Xie, Adamo Young, and Karsten Kreis for helpful discussions.
Molecular visualizations were created with {\small\texttt{py3Dmol}} \citep{rego20153dmol}.
This research was undertaken thanks in part to funding provided to the University of Toronto’s Acceleration Consortium from the Canada First Research Excellence Fund CFREF-2022-00042.
Computational resources used in preparing this research were provided by the Acceleration Consortium.
A.A.-G. thanks Anders G.~Frøseth for his generous support. A.A.-G. also acknowledges the generous support of the Canada 150 Research Chairs program.

We acknowledge the Python community \citep{van1995python,oliphant2007python} for developing the core set of tools that enabled this work, including
PyTorch \citep{paszke2019pytorch},
PyTorch Lightning \citep{Falcon_PyTorch_Lightning_2019},
PyTorch Geometric \citep{fey2019fast},
Geomstats \citep{geomstats, geomstats_code},
einops \citep{rogozhnikov2022einops},
RDKit \citep{rdkit_2023_09_5},
py3Dmol \citep{rego20153dmol},
Jupyter \citep{kluyver2016jupyter},
Matplotlib \citep{hunter2007matplotlib},
seaborn \citep{waskom2021seaborn},
NumPy \citep{2020NumPy-Array},
SciPy \citep{2020SciPy-NMeth},
pandas \citep{The_pandas_development_team_pandas-dev_pandas_Pandas},
pybind11 \citep{pybind11},
and Eigen \citep*{eigenweb}.

\bibliography{refs}
\bibliographystyle{iclr2025_conference}

\newpage 

\appendix

\section{Rotational spectroscopy \& molecular structure \label{app:rotspec}}

\subsection{Notation}
$(P_X, P_Y, P_Z)$ are really called the \emph{planar moments of inertia}. We refer to them as the moments of inertia as shorthand. The actual \emph{moments of inertia} $(I_A, I_B, I_C)$ are uniquely related to the planar moments by a simple linear transformation \citep{kraitchman1953determination}.

\subsection{Inertia edge cases \label{app:edge-cases}}

We assume that the molecule of interest has moments of inertia $P_X > P_Y > P_Z > 0$, or in other words, is a nonplanar asymmetric rotor.
This assumption holds for the vast majority of molecules.
We now discuss edge cases, such as perfectly symmetric, planar, or linear molecules.
It is worth noting that, owing to their rarity and symmetries, these edge cases have significant overlap with the set of molecules that have already been studied \citep{mcguire20182018}.

When structures have two equal eigenvalues ($P_Y=P_Z$) in their inertia matrix, there is no longer a unique choice of these two principal axes.
These axes now sweep out a plane of possibilities, and numerical diagonalization will arbitrarily pick two orthogonal axes from this plane.
But, making an arbitrary choice does not break the mapping between $\mX$ and $\mU$ in \Cref{eq:U}.
The only issue is that to respect this additional symmetry, the flow should be invariant to in-plane rotoreflections of the molecule.
However, these examples are so rare that they can be ignored: 76 examples in QM9 and 8 examples in GEOM.
If needed, this symmetry can be handled using data augmentation.

Stiefel Flow Matching cannot handle \emph{exactly} planar ($P_Z=0$) and \emph{exactly} linear ($P_Y=P_Z=0$) molecules due to divide by zero in \Cref{eq:U}.
For this reason, $3109$ examples are removed from QM9.
The vast majority of ``planar'' molecules are actually slightly nonplanar and therefore pose no issue.
In rare cases where molecules are truly planar, Stiefel Flow Matching can be reformulated for 1 and 2 dimensions.

\subsection{Experimental workflow}
\label{sec:experimental}

For an unknown molecule, we assume that its molecular formula can be measured by high-resolution mass spectrometry \citep{marshall2008high} and that its moments can be measured from rotational spectroscopy \citep{gordy1984microwave}.
When not available, molecular formula may be guessed by brute force, informed by the size of the moments.

At a high level, rotational spectroscopy observes how molecules freely rotate in the gas phase.
The rotation of molecules is \emph{quantized}, giving rise to a discrete set of rotational states.
Molecules can absorb or emit radiation at \emph{characteristic} wavelengths to transition between these energy levels.
Rotational spectroscopy measures the energies of these transitions.
A broadband microwave spectrometer can simultaneously measure thousands of these transitions, producing a spectrum of many sharp peaks \citep{brown2006rotational,brown2008broadband}.
Rotational transition energies are in the microwave to far infrared region, which is why rotational spectroscopy is also known as microwave spectroscopy.

For an asymmetric rigid molecule, and neglecting effects like centrifugal distortion and hyperfine structure, the molecule's rotational energy levels are essentially determined by three unique \emph{rotational constants}, $A(BC), A > B > C$.
Rotational constants are inversely proportional to the principal moments of inertia, i.e., $A = \frac{\hbar^2}{2I_A}$.
Each energy level is also indexed by quantum numbers.
Transitions are the differences in these energy levels.
The molecule must have an appreciable dipole moment for these transitions to be measured.

Once a spectrum is measured, the rotational spectroscopist is tasked with assigning each transition to its quantum numbers and ultimately assigning rotational constants $A(BC)$. Spectral assignment is a challenging problem tackled in other works \citep{zaleskiAutomatedAssignmentRotational2018, yeh2019automated}.
Rotational constants can then simply be inverted to obtain effective moments of inertia.

Using these effective moments, a spectroscopist can now search for the true structure using any of the models developed in this work.
In an actual structure elucidation campaign, only a few targets are considered, which permits querying $K>1000$ samples for each target, and also leaves enough computational resources to evaluate the stability of every generated sample by quantum chemistry.

\subsection{Experimental precision}

While the proposed method generates structures which satisfy the moment constraints \emph{exactly}, and while rotational spectroscopy can measure experimental rotational constants to many digits of precision \citep{vogt2011accuracy}, it is unfortunate that the experimental rotational constants do not directly translate to moments of inertia.
This is because molecules are not perfectly rigid: Experiment observes properties that have been \emph{vibrationally} averaged, including the rotational constants.
Conformational fluctuations such as torsions can be frozen out by cooling molecules to their ground vibrational state.
However, even in the ground vibrational state, a molecule is still vibrating due to zero-point energy.
As a result, the experimental rotational constants $A(BC)_0$ are proportional to $\av{\frac{1}{r^2}}$, whereas equilibrium rotational constants $A(BC)_e$ are proportional to $\frac{1}{\av{r}^2}$, where $\av{\cdot}$ denotes a vibrational average.
Structures in QM9 and GEOM have been geometry optimized to reach equilibrium structures $\av{r}$.
This error due to zero-point vibration effects is the major source of uncertainty between equilibrium and experimental rotational constants, on the order of 1\% relative error \citep{vogt2011accuracy, puzzarini2023connections}.

Typically, experimental rotational constants $A(BC)_0$ can be corrected into equilibrium rotational constants $A(BC)_e$ by a rovibrational calculation.
Experimental and equilibrium rotational constants are related by a perturbative expansion \citep{demaisonEquilibriumMolecularStructures2011}
\begin{equation}
    A(BC)_0 = A(BC)_e - \frac{1}{2} \sum_i \alpha_i^{A(BC)} + \frac{1}{8} \sum_{ij} \gamma_{ij}^{A(BC)} + \cdots,
    \label{eq:vibrotinteraction}
\end{equation}
where $i,j$ index normal modes of vibration ($i,j \in \{1, 2, \ldots, 3n-6\}$ for $n$ atoms), and $\alpha$ and $\gamma$ are first and second order interaction constants that couple rotational and vibrational motions together. As a Taylor series, $\alpha$ is generally much larger than $\gamma$, so $\gamma$ is usually neglected.
The rovibrational correction $\alpha$ can be calculated using an electronic structure method with a good compromise between computational time and accuracy \citep{puzzarini2023connections,spaniolRotationalInvestigationThree2023}. However, this calculation requires knowing the structure in the first place.

Alternatively, experimental rotational constants can be approximately corrected to equilibrium rotational constants by simple empirical scaling factors \citep{leeBayesianAnalysisTheoretical2020}.

But, exact moment constraints do allow application in the following sense: if one were to \emph{a priori} guess the equilibrium moments correctly, one could then verify whether they are indeed correct by generating structures, calculating their rovibrational corrections, and then checking their agreement to the experimental rotational constants.
Therefore, given experimental rotational constants $A(BC)_0$, one can use this verification procedure in a fine-grid search for the true equilibrium rotational constants $A(BC)_e$.
This is feasible since $A(BC)_e$ consist of only 3 numbers.
Experimental precision in $A(BC)_0$ is maintained up to the precision in computing $\alpha$.
To verify that a structure is the true structure, we must know that (1) its $A(BC)_e$ and $\alpha$ match $A(BC)_0$ and (2) its gradient norm is 0.

\section{The Stiefel manifold}
\label{sec:stiefel-manifold}

In this section, we discuss various facts about the Stiefel manifold and computations thereon. We refer readers to \citep{Lee2003,edelman1998geometry,bendokat2024grassmann} for further details.

The Stiefel manifold $\Stiefel{n}{p}$ is the set of rectangular orthonormal matrices of shape $n \times p$:
\begin{equation}
\Stiefel{n}{p} = \{\mU \in \R^{n \times p} \mid \mU^\top \mU = \mI_p\}.
\end{equation}
$\Stiefel{n}{p}$ is a manifold of dimension $np - \tfrac{1}{2}p(p + 1)$.
An element of $\Stiefel{n}{p}$ can be thought of as a collection of $p$ orthonormal $n$-dimensional vectors (i.e. a $p$-frame living in $n$-dimensional space), or as the first $p$ columns of an $n \times n$ orthogonal matrix.
In fact, $\Stiefel{n}{p}$ generalizes some well-known spaces. For example, $\Stiefel{n}{n}$ is the orthogonal group $\mathrm{O}(n)$, and $\Stiefel{n}{n-1}$ is diffeomorphic to the special orthogonal group $\mathrm{SO}(n)$, and $\Stiefel{n}{1}$ recovers the unit $n$-sphere $S^{n-1}$.

\subsection{Tangent space} 

The tangent space of $\Stiefel{n}{p}$ at a point $\mU$ is  identified with the subspace
\begin{equation}\label{eq:stiefel_tanspace}
\tanspace{\mU}{\Stiefel{n}{p}} = \{\Delta \in \R^{n \times p} \mid  \mU^\top\Delta +\Delta^\top\mU = \vzero \}.
\end{equation}
In other words, this is the set of $n \times p$ matrices $\Delta$ for which $\mU^\top \Delta$ is skew-symmetric.

\subsection{Canonical metric} \label{app:canmetric}

Smoothly equipping every tangent space with an inner product turns $\Stiefel{n}{p}$ into a Riemannian manifold. We exclusively consider the canonical metric,
\begin{equation}
    \langle\Delta, \tilde{\Delta}\rangle_\mU = \trace \Delta^\top \left(\mI_n - \tfrac{1}{2}\mU\mU^\top\right) \tilde{\Delta}.
\end{equation}
Any inner product induces a norm by $||\Delta||_\mU = \sqrt{\langle \Delta, \Delta \rangle_\mU}$. For the canonical metric, the squared norm can be rearranged as:
\begin{align}
||\Delta||_\mU^2 = \langle\Delta, \Delta\rangle_\mU = \trace \Delta^\top \Delta - \tfrac{1}{2} \trace(\mU^\top\Delta)^\top(\mU^\top \Delta) = ||\Delta||^2_F - \tfrac{1}{2}||\mU^\top \Delta||^2_F,
\end{align}
where $||\cdot||_F$ is the Frobenius norm. Computing the canonical norm in this way is much faster for the node-wise graph batching approach used by PyTorch Geometric \citep{Fey_Fast_Graph_Representation_2019}.

\subsection{Exponential and logarithm} \label{app:expandlog}

The Stiefel exponential can be computed by the algorithm presented by \cite{edelman1998geometry}, who give the closed-form expression for a geodesic $\gamma$ with initial conditions $\gamma(0) = \mU$ and $\dot{\gamma}(0) = \Delta$. Here, we reproduce the algorithm for computing $\exp_\mU(\Delta) = \gamma(1)$.

\begin{algorithm}[H]
\caption{Computing a Stiefel geodesic $\gamma(t)$. \citep{edelman1998geometry}}
\label{alg:stiefelexp}
\begin{algorithmic}[1]
\Require{Base point $\mU \in \Stiefel{n}{p}$, initial direction $\Delta \in \tanspace{\mU}{\Stiefel{n}{p}}$, time $t \in \R$}
\State $\mQ\mR \gets (\mI_n - \mU \mU^\top)\Delta$, where $\mQ \in \R^{n \times p}$  and $\mR \in \R^{p \times p}$ \Comment{QR decomposition}
\State  $\mA \gets  
\begin{pmatrix}
\mU^\top \Delta & -\mR^\top \\
\mR & \mathbf{0}
\end{pmatrix} \in \R^{2p \times 2p}$
\State  $\left( \begin{array}{c|c}
   \mM(t) & \cdots \\
   \midrule
   \mN(t) & \cdots \\
\end{array}\right) 
\gets
\exp_m(t \mA)$, where $\mM(t), \mN(t) \in \R^{p \times p}$  \Comment{take submatrices}
\State \Return $\mU \mM(t) + \mQ \mN(t) \in \Stiefel{n}{p}$
\end{algorithmic}
\end{algorithm}

An efficient algorithm for computing the Stiefel logarithm is reproduced and simplified here from \cite{zimmermann2022computing}. We implement the logarithm in C++ for speed on the CPU, as it must be called on every fetch of a data example. This does not present a bottleneck: computing logarithmic maps takes an average of 0.1 ms for molecules in both QM9 and GEOM. The logarithm has a cost of $O(np^2)$. The thin QR decomposition and initial matrix multiplications have a cost of $O(np^2)$, while the inner loop's Schur decomposition, Sylvester solve, matrix multiplications, and matrix exponential have a cost of $O(p^3)$.
We only execute the inner loop a maximum of 20 times (see \Cref{sec:empirical-log} for convergence).

\begin{algorithm}[H]
\caption{Computing the Stiefel logarithm. \citep{zimmermann2022computing}}
\label{alg:stiefellog}
\begin{algorithmic}[1]
    \Require{Base point $\mU \in \Stiefel{n}{p}$, target point $\tilde{\mU} \in \Stiefel{n}{p}$}
    \State $\mM \gets \mU^\top \tilde{\mU} \in \R^{p \times p}$
    \State $\mQ \mN \gets \tilde{\mU} - \mU \mM \in \R^{n \times p}$ \Comment{thin QR}
    \State $\mO \mR \gets \begin{pmatrix}
        \mM \\
        \mN \\
    \end{pmatrix} \in \R^{2p \times p}$, $\mO \in \R^{2p \times 2p}$ \Comment{compute orthogonal completion via full QR}
    \State $\mV \gets \begin{pmatrix}
        \mM & \multirow{2}{*}{$\mO_{\cdot,p:2p}$} \\
        \mN &
    \end{pmatrix} \in \mathrm{SO}(2p)$ \Comment{flip sign if $\det < 0$}
    \For{$k=1, \ldots, 20 $}
        \State $\begin{pmatrix}
            \mA & -\mB^\top \\
            \mB & \mC
        \end{pmatrix} \gets \log_m (\mV)$ \Comment{use Schur matrix logarithm}
        \State $\mS \gets \frac{1}{12}\mB \mB^\top - \frac{1}{2}\mI_p$
        \vspace{.5mm}
        \State solve $\mC = \mS \mGamma + \mGamma \mS$ for $\mGamma \in \R^{p \times p}$ \Comment{symmetric Sylvester equation}
        \vspace{.5mm}
        \State $\mPhi \gets \exp_m(\mGamma) \in \R^{p \times p}$ \Comment{matrix exponential, or use Cayley transform}
        \vspace{.5mm}
        \State $\mV_{\cdot,p:2p} \gets \mV_{\cdot,p:2p} \mPhi$ \Comment{rotate last $p$ columns of $\mV$}
        \vspace{.5mm}
    \EndFor
    \State \Return $\mU \mA + \mQ \mB \in \tanspace{\mU}{\Stiefel{n}{p}}$
\end{algorithmic}
\end{algorithm}

\subsection{Uniform sampling}  \label{app:sampling}

To sample uniformly on $\Stiefel{n}{p}$ with respect to the Haar measure, we can compute $\mU = \mZ (\mZ^\top \mZ)^{-1/2}$ for a random matrix $\mZ \in \R^{n \times p}$ whose elements are drawn i.i.d.\ from a standard Gaussian (i.e. $\mZ =$ \texttt{randn(n, p)}).

\subsection{Householder reflections} \label{app:rodrigues}

To align the final column of a matrix $\mU \in \St(n, p)$ to a unit vector $\vy$, we can rotate its columns under a transformation $\mR \in \mathrm{SO}(n)$ such that $\mR \mU_{\cdot, 4} = \vy$. Let $\mU_{\cdot, 4} = \vx$. For any unit vector $\vv \in S^{n-1}$, the Householder matrix $\mH(\vv) = \mI_n - 2\vv \vv^\top$ has determinant $-1$. Then, 
\begin{equation}
    \mR = \mH(\vy) \mH\!\left(\frac{\vx + \vy}{||\vx + \vy||}\right) \in \mathrm{SO}(n)
\end{equation}
is our desired rotation. In particular, we can verify that
\begin{equation}
    \mH\!\left(\frac{\vx + \vy}{||\vx + \vy||}\right)\vx = \vx - \frac{(2 + 2\vx^\top\vy)}{||\vx + \vy||^2}(\vx + \vy) = \vx - (\vx + \vy) = -\vy 
\end{equation}
so that $\mR\vx = \mH(\vy)(-\vy) = \vy$, as desired.

\subsection{Orthogonal projections} \label{app:orthoproject}

The orthogonal projection of a point $\mU_0 \in \R^{n \times p}$ onto $\Stiefel{n}{p}$ is a special case of the well-known orthogonal Procrustes problem: 
\begin{equation}
\argmin_{\mU} ||\mU\mI_p - \mU_0||_F, \quad \text{subject to } \mU^\top\mU = \mI_p. 
\end{equation}
Letting $\mU_0 = \mA\mathbf{\Sigma}\mB^\top$ under a singular value decomposition, the solution to this problem is $\mA\mB^\top$.
The orthogonal projection of a point $\mZ \in \R^{n \times p}$ onto the tangent space $\tanspace{\mU}{\Stiefel{n}{p}}$ is computed by $\mZ - \mU \symmat(\mU^\top \mZ)$, where $\symmat(\mA) = \tfrac{1}{2}(\mA + \mA^\top)$ is a symmetrization operation.

\subsection{The zero center-of-mass submanifold} \label{app:zerocom}
 
For $n \geq 5$, we are interested in the subset of $\Stiefel{n}{4}$ obtained by fixing the final column to a fixed unit vector $\va \in S^{n - 1}$: 
\begin{equation}
\Manif = \{\mU \in \Stiefel{n}{p} \mid  \mU_{\cdot, 4} = \va \}.
\end{equation}
Note that $S = \pi^{-1}(\{\va\})$ is a level set of the projection map
\begin{equation}
\pi \colon \Stiefel{n}{4} \to S^{n - 1}, \quad \mU \mapsto \mU_{\cdot, 4}.
\end{equation}
In fact, $\pi$ is a smooth surjective map of constant rank, so it is a submersion by the global rank theorem. Hence, $\Manif$ is an embedded submanifold by the submersion level set theorem. The tangent vectors to $\Manif$ are exactly those whose final column is zero, since
\begin{equation}
\tanspace{\mU}{\Manif} = \ker \differential{\pi}{\mU} = \{\Delta \in \tanspace{\mU}{\Stiefel{n}{4}} \mid \Delta_{\cdot, 4} = \mathbf{0}\}.
\end{equation}
If $\Manif$ further inherits the canonical metric from $\Stiefel{n}{4}$, then it becomes a Riemannian manifold. Note that $\Manif$ is homeomorphic to $\Stiefel{n-1}{3}$, so it is connected and compact. Hence, it is geodesically complete and any two points in $\Manif$ can be connected with a length-minimizing geodesic on $\Manif$. Theorem \ref{thm:totally_geodesic} shows that $\Manif$ is totally geodesic, i.e., any geodesic on $\Manif$ is a geodesic on $\Stiefel{n}{4}$. This allows us to perform simpler computations in the ambient space.
Theorem \ref{thm:project} computes projections of arbitrary matrices onto $\tanspace{\mU}{\Manif}$.  

\begin{theorem}\label{thm:totally_geodesic}
As defined above, $\Manif$ is totally geodesic. 
\end{theorem}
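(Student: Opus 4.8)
The plan is to realize $\Manif$ as a connected component of the fixed-point set of a single isometry of $\Stiefel{n}{4}$, and then to invoke the classical principle that every connected component of the fixed-point set of a family of isometries is a totally geodesic submanifold. The reason behind that principle is short: an isometry $\phi$ carries geodesics to geodesics, so if a $\Stiefel{n}{4}$-geodesic $\gamma$ has both $\gamma(0)$ and $\dot\gamma(0)$ fixed by $\differential{\phi}{\gamma(0)}$, then $\phi\circ\gamma$ is a geodesic with the same initial data, whence $\phi\circ\gamma=\gamma$ by uniqueness and $\gamma$ stays in the fixed set; since $\Manif$ is compact, hence geodesically complete, this is exactly what one needs to conclude that $\Manif$ is totally geodesic.

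First I would construct the isometry. Let $\mH_\va = \mI_n - 2\va\va^\top \in \mathrm{O}(n)$ be the Householder reflection associated with $\va$, let $\mO = \diag(1,1,1,-1) \in \mathrm{O}(4)$, and define $\phi(\mU) = \mH_\va\,\mU\,\mO$. I would then verify that $\phi$ is an isometry of the canonical metric: left multiplication by $\mathrm{O}(n)$ and right multiplication by $\mathrm{O}(4)$ each preserve the canonical metric, which follows in one line from cyclicity of the trace together with $\mH_\va^\top\mH_\va = \mI_n$ and $\mO\mO^\top = \mI_4$, and a composition of isometries is an isometry.

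Next I would identify $\mathrm{Fix}(\phi)$. Reading $\phi(\mU) = \mU$ column by column, the first three columns of $\mU$ must lie in the $+1$-eigenspace of $\mH_\va$, which is $\va^\perp$, and the fourth column must lie in its $-1$-eigenspace, which is $\vspan(\va)$; hence $\mU_{\cdot,4} = \pm\va$. Thus $\mathrm{Fix}(\phi) = \Manif \sqcup \Manif'$, where $\Manif'$ is the mirror copy with $\mU_{\cdot,4} = -\va$, and the two pieces are separated by the locally constant map $\mU\mapsto\mU_{\cdot,4}^\top\va \in \{\pm1\}$. Because $n\geq 5$, the excerpt gives $\Manif\cong\Stiefel{n-1}{3}$, which is connected, so $\Manif$ is a full connected component of $\mathrm{Fix}(\phi)$. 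Finally, differentiating gives $\differential{\phi}{\mU}(\Delta)=\mH_\va\Delta\mO$, and the characterization $\tanspace{\mU}{\Manif}=\{\Delta\in\tanspace{\mU}{\Stiefel{n}{4}}\mid\Delta_{\cdot,4}=\vzero\}$ from the excerpt — combined with the tangency relation $\mU^\top\Delta+\Delta^\top\mU=\vzero$, which forces every column of $\Delta$ into $\va^\perp$ — shows that $\tanspace{\mU}{\Manif}$ lies in the $+1$-eigenspace of $\differential{\phi}{\mU}$, so the principle of the first paragraph applies. I expect the only genuinely delicate step to be this identification of the fixed-point set: one must be sure that $\Manif$ is an entire component of $\mathrm{Fix}(\phi)$ and not merely a subset — which is precisely where $n\ge 5$ and the connectedness of $\Stiefel{n-1}{3}$ enter — and that $\mathrm{Fix}(\phi)$ contains no extra points beyond $\Manif\sqcup\Manif'$; everything else is routine once the eigenspaces of $\mH_\va$ and $\mO$ are pinned down.

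As a self-contained alternative that avoids the fixed-point machinery, I could instead plug $\mU\in\Manif$ and $\Delta\in\tanspace{\mU}{\Manif}$ directly into the explicit geodesic of Algorithm~\ref{alg:stiefelexp}. Since $\Delta_{\cdot,4}=\vzero$ and the tangency relation gives $\va^\top\Delta=\vzero^\top$, both $(\mI_n-\mU\mU^\top)\Delta$ and $\mU^\top\Delta$ have vanishing fourth column (and $\mU^\top\Delta$ also a vanishing fourth row), so the $2p\times 2p$ matrix $\mA$ assembled in that algorithm has zero fourth and eighth rows and columns; hence $\exp_m(t\mA)$ is the identity in those slots, giving $\mM(t)_{\cdot,4}=\ve_4$ and $\mN(t)_{\cdot,4}=\vzero$, so $\gamma(t)_{\cdot,4}=\mU\ve_4=\va$ for all $t$ and the geodesic never leaves $\Manif$.
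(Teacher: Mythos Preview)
Your proof is correct. The paper's argument is precisely your alternative: it plugs $\mU\in\Manif$ and $\Delta$ with $\Delta_{\cdot,4}=\vzero$ into Algorithm~\ref{alg:stiefelexp}, observes that the fourth row and column of $\mA$ vanish (so $\exp_m(t\mA)_{4,4}=1$ with zeros elsewhere in that slot), and reads off $\gamma(t)_{\cdot,4}=\va$. Your main route via the isometry $\phi(\mU)=\mH_\va\,\mU\,\mO$ is genuinely different and more conceptual. It never touches the explicit geodesic formula and would apply verbatim to \emph{any} metric on $\Stiefel{n}{4}$ invariant under the left $\mathrm{O}(n)$- and right $\mathrm{O}(4)$-actions, not just the canonical one; it also generalizes at once to pinning several columns simultaneously. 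The paper's direct computation, by contrast, is tied to Edelman's closed form but is fully self-contained and dovetails with the block-structure analysis that is reused in \Cref{thm:log3}. One small remark: your appeal to compactness and geodesic completeness of $\Manif$ is unnecessary for this argument---the isometry step already shows that ambient geodesics with $\Manif$-tangent initial data lie in $\mathrm{Fix}(\phi)$ for all time, and connectedness of $\Manif$ (which you correctly source to $n\ge 5$) is all that is needed to keep them in the right component.
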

\begin{proof}
A sufficient condition is that for any $\mU \in \Manif$ and $\Delta \in \tanspace{\mU}{\Manif}$, the geodesic $\gamma$ with initial conditions $\gamma(0) = \mU$ and $\dot{\gamma}(0) = \Delta$ stays within $\Manif$. Fortunately, $\gamma$ is given in closed-form by \cite{edelman1998geometry} in Algorithm \ref{alg:stiefelexp}. Since $\Delta_{\cdot, 4} = \mathbf{0}$, the fourth columns of $\mR$ and $\mU^\top\Delta$ are zero. Since $\mU^\top \Delta$ is skew-symmetric by \autoref{eq:stiefel_tanspace}, its fourth row is also zero. Then, the fourth row and column of $\mA$ and $\exp_m(t\mA)$ are zero, but $\exp_m(t \mA)_{4, 4} = 1$. It follows that $\mM(t)_{\cdot, 4} = (0, 0, 0, 1)$ and $\mN(t)_{\cdot, 4} = \mathbf{0}$, so that 
\begin{equation}
\gamma(t)_{\cdot, 4} = \mU \left(\mM(t)_{\cdot, 4}\right) + \mQ \left(\mN(t)_{\cdot, 4}\right) = \mU_{\cdot, 4} = \va.
\end{equation}
Hence, $\gamma(t) \in \Manif$ as desired. 
\end{proof}

We now provide some lemmas which are useful for proving that the Stiefel logarithm on $\Manif$ can be computed using $\Stiefel{n}{3}$ rather than $\Stiefel{n}{4}$ (\Cref{thm:log3}).

\begin{lemma} \label{lem:power}
Let $\mX$ be a square matrix whose $i^{th}$ row and column are 0.
Positive powers of $\mX$ retain zeros in the $i^{th}$ row and column.
Specifically, suppose
\begin{equation}
\mX = \begin{pmatrix}
\mE & \vzero & \mF \\
\vzero & 0 & \vzero \\
\mG & \vzero & \mH
\end{pmatrix}, \qquad
\begin{pmatrix}
\mE & \mF \\
\mG & \mH
\end{pmatrix}^k = \begin{pmatrix}
\hat{\mE} & \hat{\mF} \\
\hat{\mG} & \hat{\mH}
\end{pmatrix}.
\end{equation}

Then we have
\begin{equation}
\mX^k = \begin{pmatrix}
\hat{\mE} & \vzero & \hat{\mF} \\
\vzero & 0 & \vzero \\
\hat{\mG} & \vzero & \hat{\mH}
\end{pmatrix}.
\end{equation}

\end{lemma}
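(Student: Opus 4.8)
The plan is to prove the displayed block identity by induction on $k$; the first sentence of the lemma (``positive powers retain zeros in the $i$th row and column'') then drops out as an immediate special case. First I would fix notation: write $\mY = \begin{pmatrix}\mE & \mF \\ \mG & \mH\end{pmatrix}$ for the matrix obtained from $\mX$ by deleting its $i$th row and column, and let $\begin{pmatrix}\hat{\mE} & \hat{\mF} \\ \hat{\mG} & \hat{\mH}\end{pmatrix} = \mY^k$ as in the statement. The claim to be proved is that $\mX^k$ is exactly the block skeleton of $\mX$ with the submatrices $(\mE,\mF,\mG,\mH)$ replaced by $(\hat{\mE},\hat{\mF},\hat{\mG},\hat{\mH})$ and the $i$th row and column still zero.

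The base case $k=1$ holds by definition of $\mX$ and $\mY$. For the inductive step, assume the claim for $k$ and compute $\mX^{k+1} = \mX^k \mX$ by block matrix multiplication, using on both factors the common $3\times 3$ partition (first block of size $i-1$, middle block of size $1$, last block of size $n-i$). By the inductive hypothesis the middle row of $\mX^k$ is zero, and by assumption the middle column of $\mX$ is zero, so the product $\mX^{k+1}$ has zero $i$th row and zero $i$th column, as required. For the four corner blocks, the middle ``$0$'' entries contribute nothing to the sums, so the remaining computation is literally the $2\times 2$ block product
\begin{equation}
\begin{pmatrix}\hat{\mE} & \hat{\mF} \\ \hat{\mG} & \hat{\mH}\end{pmatrix}\begin{pmatrix}\mE & \mF \\ \mG & \mH\end{pmatrix} = \mY^k \mY = \mY^{k+1},
\end{equation}
which gives precisely the asserted form for $\mX^{k+1}$. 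This closes the induction, and specializing to the observation that the $i$th row and column of $\mX^k$ vanish for every $k \ge 1$ yields the first assertion.

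I expect no genuine obstacle here; the argument is careful bookkeeping with block multiplication. The only point deserving a remark is that the lemma writes $i$ as an interior index, while in the application $i$ could in principle be the first or last index: this is handled either by allowing the top or bottom block row/column to be empty (the block-multiplication argument is unchanged), or more cleanly by noting that conjugating $\mX$ by the permutation matrix that sends index $i$ to the middle position commutes with taking powers, so the general statement is equivalent to the one with $i$ displayed in the middle.
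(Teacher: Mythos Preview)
Your proposal is correct and follows essentially the same approach as the paper's proof: induction on $k$, with the inductive step carried out by block-multiplying $\mX^k \mX$ and identifying the corner blocks with $\mY^k \mY = \mY^{k+1}$. Your extra remark about the boundary case where $i$ is the first or last index (handled by empty blocks or by permutation conjugation) is a nice clarification not present in the paper, but the core argument is identical.
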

\begin{proof}
We show this by induction over $k$.
It is clear this is true for $k=1$.
Then, we assume $\mX^k = \begin{pmatrix}
\hat{\mE} & \vzero & \hat{\mF} \\
\vzero & 0 & \vzero \\
\hat{\mG} & \vzero & \hat{\mH}
\end{pmatrix}$ and aim to show for $k+1$.
We know that
\begin{equation}
\begin{pmatrix}
\mE & \mF \\
\mG & \mH
\end{pmatrix}^{k+1} = \begin{pmatrix}
\hat{\mE} & \hat{\mF} \\
\hat{\mG} & \hat{\mH}
\end{pmatrix}\begin{pmatrix}
\mE & \mF \\
\mG & \mH
\end{pmatrix} = \begin{pmatrix}
\hat{\mE}\mE + \hat{\mF}\mG & \hat{\mE}\mF + \hat{\mF}\mH \\
\hat{\mG}\mE + \hat{\mH}\mG & \hat{\mG}\mF + \hat{\mH}\mH
\end{pmatrix}.
\end{equation}

At the same time, we have that
\begin{equation}
\mX^{k+1} = \begin{pmatrix}
\hat{\mE} & \vzero & \hat{\mF} \\
\vzero & 0 & \vzero \\
\hat{\mG} & \vzero & \hat{\mH}
\end{pmatrix}
\begin{pmatrix}
\mE & \vzero & \mF \\
\vzero & 0 & \vzero \\
\mG & \vzero & \mH
\end{pmatrix} = \begin{pmatrix}
\hat{\mE}\mE + \hat{\mF}\mG & \vzero & \hat{\mE}\mF + \hat{\mF}\mH \\
\vzero & 0 & \vzero \\
\hat{\mG}\mE + \hat{\mH}\mG & \vzero & \hat{\mG}\mF + \hat{\mH}\mH
\end{pmatrix},
\end{equation}
which completes the proof.
\end{proof}

\begin{lemma} \label{lem:exp}
Let $\mX$ be a square matrix whose $i^{th}$ row and column are 0.
The matrix exponential ``ignores'' these zeros.
Specifically, suppose
\begin{equation}
\mX = \begin{pmatrix}
\mE & \vzero & \mF \\
\vzero & 0 & \vzero \\
\mG & \vzero & \mH
\end{pmatrix},
\qquad \exp_m\begin{pmatrix}
\mE & \mF \\
\mG & \mH
\end{pmatrix} = \begin{pmatrix}
\bar{\mE} & \bar{\mF} \\
\bar{\mG} & \bar{\mH}
\end{pmatrix}.
\end{equation}
Then we have
\begin{equation}
\exp_m(\mX) = \begin{pmatrix}
\bar{\mE} & \vzero & \bar{\mF} \\
\vzero & 1 & \vzero \\
\bar{\mG} & \vzero & \bar{\mH}
\end{pmatrix}.
\end{equation}
\end{lemma}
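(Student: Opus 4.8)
The plan is to expand the matrix exponential as its defining power series, $\exp_m(\mX) = \sum_{k=0}^{\infty} \tfrac{1}{k!}\mX^k$, and apply Lemma \ref{lem:power} term by term. First I would isolate the $k = 0$ term: $\mX^0 = \mI$ already has the claimed shape, with the appropriately-sized identity blocks in the $\mE$- and $\mH$-positions, a $1$ in the $(i,i)$ entry, and zeros everywhere else in the $i^{th}$ row and column. For every $k \geq 1$, Lemma \ref{lem:power} tells us that $\mX^k$ has zeros throughout its $i^{th}$ row and column, and that its remaining four blocks are exactly the blocks of $\left(\begin{smallmatrix} \mE & \mF \\ \mG & \mH \end{smallmatrix}\right)^k$.

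Next I would sum the series entrywise. The $(i,i)$ entry of $\exp_m(\mX)$ is $\tfrac{1}{0!}\cdot 1 + \sum_{k \geq 1}\tfrac{1}{k!}\cdot 0 = 1$, and every other entry in the $i^{th}$ row or column is a sum of zeros, hence $0$. For the complementary block structure, combining the $k = 0$ contribution (the reduced identity) with the $k \geq 1$ contributions yields $\sum_{k \geq 0}\tfrac{1}{k!}\left(\begin{smallmatrix} \mE & \mF \\ \mG & \mH \end{smallmatrix}\right)^k = \exp_m\left(\begin{smallmatrix} \mE & \mF \\ \mG & \mH \end{smallmatrix}\right) = \left(\begin{smallmatrix} \bar{\mE} & \bar{\mF} \\ \bar{\mG} & \bar{\mH} \end{smallmatrix}\right)$ in the corresponding positions, which is precisely the claimed form.

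To make the termwise manipulation rigorous I would note that the exponential series converges absolutely in any submultiplicative matrix norm, so it may be evaluated entry by entry and the order of summation is immaterial; equivalently, extracting a fixed matrix entry is a continuous linear functional and so commutes with the convergent sum. I do not expect a genuine obstacle here — this is an immediate consequence of Lemma \ref{lem:power} together with the power-series definition. The only point requiring care is the bookkeeping of the $k = 0$ term: Lemma \ref{lem:power} is stated only for positive powers, and the $1$ appearing in the $(i,i)$ slot of $\exp_m(\mX)$ (as opposed to a $0$, which is what one would get from the $k \geq 1$ terms alone) comes entirely from $\mX^0 = \mI$.
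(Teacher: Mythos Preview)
Your proposal is correct and follows essentially the same approach as the paper: expand the matrix exponential as its power series, invoke Lemma~\ref{lem:power} for each $k \geq 1$ term, and handle the $k = 0$ identity term separately to account for the $1$ in the $(i,i)$ slot. Your version is more detailed (in particular, the remark on absolute convergence justifying termwise summation is a nice touch the paper omits), but the underlying argument is identical.
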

\begin{proof}
The matrix exponential is given by
\begin{equation}
\exp_m(\mX) = \sum\limits_{k=0}^\infty \frac{1}{k!} \mX^k.
\end{equation}

By \Cref{lem:power}, all non-identity terms of this series are identical to terms in the series for $\exp_m\begin{pmatrix}
\mE & \mF \\
\mG & \mH
\end{pmatrix}$ but with zeros inserted in the $i^{th}$ row and column.
The identity term then contributes the extra $1$ in the main diagonal.
\end{proof}

\begin{theorem}\label{thm:log3}
Let $\mU \in \Manif$, and let $\tilde{\mU}$ be its first three columns so that $\mU = \begin{pmatrix}\tilde{\mU} & \va\end{pmatrix}$.

The exponential on $\Manif$ can be computed as the exponential on $\Stiefel{n}{3}$ after discarding the last column of $\Delta = \begin{pmatrix}\tilde{\Delta} & \vzero\end{pmatrix}$.
\begin{equation}
\exp_{\mU}(\Delta) = \begin{pmatrix}\exp_{\tilde{\mU}}(\tilde{\Delta}) & \va\end{pmatrix}
\end{equation}

Similarly, the logarithm on $\Manif$ can be computed as the logarithm on $\Stiefel{n}{3}$ followed by concatenating a zero column:
\begin{equation}
\log_{\mU_0}(\mU_1) = \begin{pmatrix}\log_{\tilde{\mU}_0}(\tilde{\mU}_1) & \vzero\end{pmatrix}
\end{equation}
\end{theorem}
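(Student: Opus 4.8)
### Proof Proposal

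The plan is to reduce both identities to the block-structure lemmas (Lemmas~\ref{lem:power} and~\ref{lem:exp}) already established, applied to the particular matrices that appear inside Algorithms~\ref{alg:stiefelexp} and~\ref{alg:stiefellog}. The key observation is that when the last column of $\Delta$ vanishes, the ``extra'' fourth row and column of every intermediate matrix built by the algorithms are zero, so the algorithms for $\Stiefel{n}{4}$ degenerate into the algorithms for $\Stiefel{n}{3}$ with a trivial extra coordinate carried along.

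First I would prove the exponential identity. Let $\mU = \begin{pmatrix}\tilde{\mU} & \va\end{pmatrix} \in \Manif$ and $\Delta = \begin{pmatrix}\tilde{\Delta} & \vzero\end{pmatrix} \in \tanspace{\mU}{\Manif}$. Trace through Algorithm~\ref{alg:stiefelexp} with $p = 4$. Because $\Delta_{\cdot,4} = \vzero$, the matrix $(\mI_n - \mU\mU^\top)\Delta$ has zero fourth column, so in its thin QR decomposition $\mQ\mR$ we may take $\mR$ to have zero fourth row and column and $\mQ$ to have zero fourth column; moreover these are exactly the QR factors obtained by running the $p=3$ algorithm on $(\mI_n - \tilde{\mU}\tilde{\mU}^\top)\tilde{\Delta}$ (note $\mU\mU^\top = \tilde{\mU}\tilde{\mU}^\top + \va\va^\top$, and the $\va\va^\top$ term annihilates against $\Delta$ since $\va^\top \tilde{\Delta}$ appears in the skew-symmetric block... this needs a short check). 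Next, $\mU^\top\Delta$ is skew-symmetric with zero fourth row and column (its fourth column is $\mU^\top\Delta_{\cdot,4} = \vzero$, hence by skew-symmetry its fourth row too). Therefore the $8\times 8$ matrix $\mA$ built in line 2 has a zero row and column in the position corresponding to the fourth frame index — formally, $\mA$ is a permutation-conjugate of a matrix of the block form in Lemma~\ref{lem:exp}. Applying Lemma~\ref{lem:exp}, $\exp_m(t\mA)$ has a $1$ in that diagonal slot and zeros in the corresponding row and column, and its remaining entries equal those of $\exp_m(t\tilde{\mA})$ where $\tilde{\mA}$ is the $6\times 6$ matrix the $p=3$ algorithm would build. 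Reading off the $\mM(t)$ and $\mN(t)$ submatrices, we get $\mM(t) = \begin{pmatrix}\tilde{\mM}(t) & \vzero \\ \vzero & 1\end{pmatrix}$ and $\mN(t) = \begin{pmatrix}\tilde{\mN}(t) & \vzero\end{pmatrix}$ (up to the obvious reindexing), so that $\mU\mM(t) + \mQ\mN(t) = \begin{pmatrix}\tilde{\mU}\tilde{\mM}(t) + \tilde{\mQ}\tilde{\mN}(t) & \va\end{pmatrix} = \begin{pmatrix}\exp_{\tilde{\mU}}(\tilde{\Delta}) & \va\end{pmatrix}$ at $t=1$, which is the claim. (That $\Manif$ is totally geodesic, Theorem~\ref{thm:totally_geodesic}, guarantees this ambient-space computation is the correct intrinsic exponential.)

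The logarithm identity then follows by inverting. The logarithm is the local inverse of the exponential: $\log_{\mU_0}(\mU_1)$ is the unique (small) tangent vector $\Delta \in \tanspace{\mU_0}{\Manif}$ with $\exp_{\mU_0}(\Delta) = \mU_1$. Write $\mU_0 = \begin{pmatrix}\tilde{\mU}_0 & \va\end{pmatrix}$, $\mU_1 = \begin{pmatrix}\tilde{\mU}_1 & \va\end{pmatrix}$ (both last columns equal $\va$ since both lie in $\Manif$). Set $\tilde{\Delta} = \log_{\tilde{\mU}_0}(\tilde{\mU}_1) \in \tanspace{\tilde{\mU}_0}{\Stiefel{n}{3}}$ and $\Delta = \begin{pmatrix}\tilde{\Delta} & \vzero\end{pmatrix}$; one checks $\Delta \in \tanspace{\mU_0}{\Manif}$ using $\tanspace{\mU}{\Manif} = \{\Delta \in \tanspace{\mU}{\Stiefel{n}{4}} : \Delta_{\cdot,4} = \vzero\}$ together with the fact that $\mU_0^\top\Delta = \begin{pmatrix}\tilde{\mU}_0^\top\tilde{\Delta} & \vzero \\ \va^\top\tilde{\Delta} & 0\end{pmatrix}$ is skew-symmetric iff $\tilde{\mU}_0^\top\tilde{\Delta}$ is skew and $\va^\top\tilde{\Delta} = \vzero$ — the latter holding because $\tilde{\Delta}$, being tangent to $\Stiefel{n}{3}$ at a point of $\Manif$'s base fiber, is automatically orthogonal to $\va$ (this is the one place where the submanifold structure from Appendix~\ref{app:zerocom} is used, specifically that the first three columns of $\Manif$ live in the orthogonal complement of $\vspan(\va)$, so their tangent directions stay there too). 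By the already-proved exponential identity, $\exp_{\mU_0}(\Delta) = \begin{pmatrix}\exp_{\tilde{\mU}_0}(\tilde{\Delta}) & \va\end{pmatrix} = \begin{pmatrix}\tilde{\mU}_1 & \va\end{pmatrix} = \mU_1$. By uniqueness of the logarithm, $\Delta = \log_{\mU_0}(\mU_1)$, i.e. $\log_{\mU_0}(\mU_1) = \begin{pmatrix}\log_{\tilde{\mU}_0}(\tilde{\mU}_1) & \vzero\end{pmatrix}$.

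The main obstacle I anticipate is bookkeeping rather than conceptual: the block-zero structure in Lemmas~\ref{lem:power}--\ref{lem:exp} places the vanishing row/column in the \emph{middle}, whereas in Algorithm~\ref{alg:stiefelexp} the vanishing index sits at position $4$ inside the first $p\times p$ block of $\mA$ (and position $p+4$ if $\mR$ contributes), so one must either conjugate by an explicit permutation to match the lemma's hypothesis or restate the lemmas for an arbitrary zero index (they hold verbatim for any index $i$, as the proofs never use $i$'s location). A second, smaller point requiring care is the non-uniqueness of the QR factors $\mQ,\mR$: one should phrase the argument as ``there exists a valid QR output with the stated zero pattern,'' and since the final geodesic $\gamma(1) = \exp_{\mU}(\Delta)$ is independent of that choice, this suffices. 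Finally, for the logarithm one must make sure the ``small tangent vector'' branch of the logarithm is the one both algorithms compute — this is exactly the convergence regime of Algorithm~\ref{alg:stiefellog} assumed in Appendix~\ref{sec:empirical-log}, so no extra hypothesis is introduced.
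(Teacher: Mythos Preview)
Your proposal is correct and follows essentially the same route as the paper's proof: a line-by-line trace through Algorithm~\ref{alg:stiefelexp}, using Lemma~\ref{lem:exp} to show that the zero fourth row and column of $\mA$ make the $p=4$ computation collapse to the $p=3$ one, followed by local invertibility of the exponential for the logarithm statement. You are in fact more careful than the paper on the $\va^\top\tilde{\Delta}=\vzero$ check and the index-permutation bookkeeping; one minor slip is that $\mQ$'s fourth column cannot be taken to be zero (its columns are orthonormal)---the paper writes it as an arbitrary orthonormal vector $\vq$, which is harmless since the fourth row of $\mR$ vanishes.
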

\begin{proof}
We go line-by-line through \Cref{alg:stiefelexp} to show that the Stiefel exponential on $\Manif$ is equivalent to the exponential for $\Stiefel{n}{3}$.

The last column of $(\mI_n - \mU \mU^\top)\Delta$ is 0 because the last column of $\Delta$ is 0. Then,
\begin{equation}
\mQ\mR = (\mI_n - \mU \mU^\top)\Delta = \begin{pmatrix}\tilde{\mQ} & \vq \end{pmatrix} \begin{pmatrix} \tilde{\mR} & \vzero \\ \vzero & 0 \end{pmatrix},
\end{equation}
where $\vq$ is some orthogonal vector to the 3 columns of $\tilde{\mQ}$, and $\tilde{\mR}$ is the first 3 columns and rows of $\mR$.
This corresponds to the QR decomposition $\tilde{\mQ}\tilde{\mR} = (\mI_n - \tilde{\mU}\tilde{\mU}^\top)\tilde{\Delta}$.

In addition, we have that $\mU^\top \Delta = \begin{pmatrix} \tilde{\mU}^\top\tilde{\Delta} & \vzero \\ \vzero & 0 \end{pmatrix}$.

Thus, the block matrix $\mA$ in \Cref{alg:stiefelexp} is given as
\begin{equation}
\begin{pmatrix}
\tilde{\mU}^\top\tilde{\Delta} & \vzero & -\tilde{\mR}^\top & \vzero \\
\vzero & 0 & \vzero & 0 \\
\tilde{\mR} & \vzero & \vzero & \vzero \\
\vzero & 0 & \vzero & 0 \\
\end{pmatrix}.
\end{equation}

By \Cref{lem:exp}, the matrix exponential of $\mA$ ``ignores'' the extra 2 rows and 2 columns of zeros.

Therefore,
\begin{equation}
\left( \begin{array}{c|c}
\mM(t) & \cdots \\
\midrule
\mN(t) & \cdots \\
\end{array}\right)
= \exp_m(t\mA)
= \exp_m t\begin{pmatrix}
\tilde{\mU}^\top\tilde{\Delta} & \vzero & -\tilde{\mR}^\top & \vzero \\
\vzero & 0 & \vzero & 0 \\
\tilde{\mR} & \vzero & \vzero & \vzero \\
\vzero & 0 & \vzero & 0 \\
\end{pmatrix}
= \left( \begin{array}{cc|c}
\tilde{\mM}(t) & \vzero & \cdots \\
\vzero & 1 & \cdots \\
\midrule
\tilde{\mN}(t) & \vzero & \cdots \\
\vzero & 0 & \cdots \\
\end{array}\right),
\end{equation}
where tilde terms are equal to their counterparts in the exponential of $\Stiefel{n}{3}$.

Finally, the output of \Cref{alg:stiefelexp} is
\begin{align}
\mU\mM(t) + \mQ\mN(t) &= \begin{pmatrix}\tilde{\mU} & \va\end{pmatrix} \begin{pmatrix}\tilde{\mM}(t) & \vzero \\ \vzero & 1\end{pmatrix} + \begin{pmatrix}\tilde{\mQ} & \vq\end{pmatrix}\begin{pmatrix}\tilde{\mN}(t) & \vzero \\ \vzero & 0\end{pmatrix} \\
&= \begin{pmatrix}
\tilde{\mU}\tilde{\mM}(t) & \va
\end{pmatrix} +
\begin{pmatrix}
\tilde{\mQ}\tilde{\mN}(t) & \vzero
\end{pmatrix} \\
&= \begin{pmatrix}
\tilde{\mU}\tilde{\mM}(t) + \tilde{\mQ}\tilde{\mN}(t) & \va
\end{pmatrix} \\
&= \begin{pmatrix}
\exp_{\tilde{\mU}}(\tilde{\Delta}) & \va
\end{pmatrix}.
\end{align}

Since the Stiefel exponential is locally invertible, this also shows that the Stiefel logarithm can be computed using $\tilde{\mU}_0$ and $\tilde{\mU}_1$.

\end{proof}

\begin{theorem}\label{thm:project}
Let $\mU \in \Manif$, and let $\tilde{\mU}$ be its first three columns so that $\mU = [\tilde{\mU} \, \va]$. Given $\mZ \in \R^{n \times 3}$, the minimum-norm projection of $[\mZ \,\mathbf{0}]$ onto $\tanspace{\mU}{\Manif}$ is given by $[\pi(\mZ)\, \mathbf{0}]$, where
\begin{align}
\pi(\mZ) &= \tilde{\mU}\skewmat(\tilde{\mU}^\top \mZ) + (\mI_n - \mU\mU^\top)\mZ \in \R^{n \times 3} \\
&= \tilde{\mU}\skewmat(\tilde{\mU}^\top \mZ) + (\mI_n - \mU\mU^\top)\mZ \in \R^{n \times 3}
\end{align}
Note there is no tilde in the second term.
\end{theorem}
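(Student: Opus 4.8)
The plan is constructive: exhibit the matrix $[\pi(\mZ)\,\vzero]$, verify that $[\pi(\mZ)\,\vzero]\in\tanspace{\mU}{\Manif}$, verify that the residual $[\mZ-\pi(\mZ)\,\vzero]$ is orthogonal to all of $\tanspace{\mU}{\Manif}$, and then invoke uniqueness of orthogonal projection onto a linear subspace. I read ``minimum-norm projection'' in the Frobenius sense, matching \Cref{app:orthoproject}; a short check shows the same matrix is also orthogonal in the canonical metric, since the residual lies in the column span of $\mU$, where the two metrics differ only by a positive scalar on each column.

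First I would make $\tanspace{\mU}{\Manif}$ explicit. Writing $\mU=[\tilde{\mU}\,\va]$ and, for a candidate tangent vector, $\Delta=[\tilde{\Delta}\,\vzero]$, the characterization $\tanspace{\mU}{\Manif}=\{\Delta\in\tanspace{\mU}{\Stiefel{n}{4}}:\Delta_{\cdot,4}=\vzero\}$ established above reduces to the skew-symmetry of $\mU^\top\Delta$. Expanding $\mU^\top\Delta$ in $2\times 2$ block form (block sizes $3$ and $1$), skew-symmetry splits into exactly two conditions: $\tilde{\mU}^\top\tilde{\Delta}$ is skew-symmetric, and $\va^\top\tilde{\Delta}=\vzero$.

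Next, for membership I would use the orthonormality relations $\tilde{\mU}^\top\mU=[\mI_3\,\vzero]$ and $\va^\top\mU=[\vzero^\top\,1]$, which yield $\tilde{\mU}^\top\mU\mU^\top=\tilde{\mU}^\top$ and $\va^\top\mU\mU^\top=\va^\top$, so that both $\tilde{\mU}^\top$ and $\va^\top$ annihilate $(\mI_n-\mU\mU^\top)\mZ$. Hence $\tilde{\mU}^\top\pi(\mZ)=\skewmat(\tilde{\mU}^\top\mZ)$ is skew-symmetric and $\va^\top\pi(\mZ)=\vzero$, so $[\pi(\mZ)\,\vzero]\in\tanspace{\mU}{\Manif}$. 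For the residual, the column-wise orthogonal decomposition $\mZ=\tilde{\mU}\tilde{\mU}^\top\mZ+\va\va^\top\mZ+(\mI_n-\mU\mU^\top)\mZ$ together with $\tilde{\mU}^\top\mZ=\symmat(\tilde{\mU}^\top\mZ)+\skewmat(\tilde{\mU}^\top\mZ)$ gives $\mZ-\pi(\mZ)=\tilde{\mU}\symmat(\tilde{\mU}^\top\mZ)+\va\va^\top\mZ$. Pairing this against an arbitrary $[\tilde{\Delta}\,\vzero]\in\tanspace{\mU}{\Manif}$ in the Frobenius inner product produces $\trace(\symmat(\tilde{\mU}^\top\mZ)^\top\tilde{\mU}^\top\tilde{\Delta})+\trace(\mZ^\top\va\,\va^\top\tilde{\Delta})$: the first term is the trace of a symmetric matrix times the skew-symmetric matrix $\tilde{\mU}^\top\tilde{\Delta}$, hence $0$, and the second vanishes since $\va^\top\tilde{\Delta}=\vzero$. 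Because the last column of $[\mZ\,\vzero]$ already vanishes, and the subspace of $\R^{n\times4}$ with zero last column carries the Frobenius geometry of $\R^{n\times3}$ on its first three columns, $[\pi(\mZ)\,\vzero]$ is the orthogonal projection of $[\mZ\,\vzero]$ onto $\tanspace{\mU}{\Manif}$.

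The main obstacle is purely organizational: correctly reading off the two constraints ($\tilde{\mU}^\top\tilde{\Delta}$ skew and $\va^\top\tilde{\Delta}=\vzero$) from the block form of $\mU^\top\Delta+\Delta^\top\mU=\vzero$, and keeping track of the identities $\tilde{\mU}^\top\mU\mU^\top=\tilde{\mU}^\top$ and $\va^\top\mU\mU^\top=\va^\top$. Once these are in hand, both the membership check and the orthogonality of the residual are one-line trace computations resting on the orthogonality of symmetric and skew-symmetric matrices.
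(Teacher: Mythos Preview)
Your proof is correct and complete. Both you and the paper begin by identifying $\tanspace{\mU}{\Manif}$ with the set of $[\tilde{\Delta}\,\vzero]$ satisfying the two constraints $\tilde{\mU}^\top\tilde{\Delta}$ skew and $\va^\top\tilde{\Delta}=\vzero$, but then diverge. The paper views this as the intersection $\tanspace{\tilde{\mU}}{\Stiefel{n}{3}}\cap\vspan(\va)^\perp$, cites the Edelman projection $\pi_1(\mZ)=\tilde{\mU}\skewmat(\tilde{\mU}^\top\mZ)+(\mI_n-\tilde{\mU}\tilde{\mU}^\top)\mZ$ onto the first factor, writes $\pi_2(\mZ)=(\mI_n-\va\va^\top)\mZ$ for the second, and asserts that $\pi_1,\pi_2$ commute so that $\pi=\pi_1\circ\pi_2$ is the orthogonal projection onto the intersection. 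You instead verify directly that $[\pi(\mZ)\,\vzero]$ lies in the tangent space and that the residual $\tilde{\mU}\symmat(\tilde{\mU}^\top\mZ)+\va\va^\top\mZ$ is Frobenius-orthogonal to every tangent vector. Your route is more self-contained (it does not rely on the unproved commutativity claim or the standard fact about commuting orthogonal projections) and, as a bonus, your remark that the residual lies in the column span of $\mU$---so the canonical and Frobenius inner products against any matrix agree up to the factor $\tfrac12$---cleanly shows the same projection works for the canonical metric, which the paper does not address.
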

\begin{proof}
We can rewrite 
\begin{align}
\tanspace{\mU}{\Manif} &= \{[\tilde{\Delta}\, \mathbf{0}] \mid \tilde{\mU}^\top \tilde{\Delta} + \tilde{\Delta}^\top \tilde{\mU} = \mathbf{0}, \, \tilde{\Delta}^\top\va  = \mathbf{0}\} \\
&= \{[\tilde{\Delta}\, \mathbf{0}] \mid \tilde{\Delta} \in \tanspace{\tilde{\mU}}{\Stiefel{n}{3}} \cap \vspan(\va)^\perp \},
\end{align}
\cite{edelman1998geometry} give an orthogonal projection of $\mZ$ onto $\tanspace{\tilde{\mU}}{\Stiefel{n}{3}}$ as: 
\begin{equation}
\pi_1(\mZ) = \tilde{\mU}\skewmat(\tilde{\mU}^\top \mZ) + (\mI_n - \tilde{\mU}\tilde{\mU}^\top)\mZ,
\end{equation}
where $\skewmat(\mA) = \tfrac{1}{2}(\mA - \mA^\top)$.
The orthogonal projection of $\mZ$ onto $\vspan(\va)^\perp$ is
\begin{equation}
\pi_2(\mZ) = (\mI_n - \va\va^\top)\mZ.
\end{equation}
We can check that $\pi_1$ and $\pi_2$ commute with $\pi = \pi_1 \circ \pi_2 = \pi_2 \circ \pi_1$. Hence, $\pi$ is an orthogonal projection onto $\tanspace{\tilde{\mU}}{\Stiefel{n}{3}} \cap \vspan(\va)^\perp$, as desired.
\end{proof}

\section{Experimental details}

\subsection{Architecture} \label{app:architecture}

Explicitly, the neural network takes in moments $(P_X, P_Y, P_Z)$, time $t$, atom types $\va$, and coordinates $\mX$, and outputs a Stiefel tangent vector. Moments are embedded using sinusoidal features with wavelength geometrically spaced from 0.0001 to 10,000.
Time is similarly embedded but with a wavelength range from 0.001 to 1.
We use a reflection-equivariant network. Note that given a reflection-\textit{invariant} function $f$, the mapping  $\mX \mapsto \mathrm{sign}(\mX) \odot f(\mX)$ is reflection-equivariant, where $\mathrm{sign}(\mX)$ gives the element-wise signs of $\mX$, with $\mathrm{sign}(0) = 0$. Thus, the problem is reduced to constructing a network that is reflection-invariant with respect to the input coordinates $\mX$.

As input to our network, we begin by featurizing the molecule in a reflection-invariant manner. We obtain invariant node features $\vh_i \in \R^{d_{\text{node}}}$ by using the molecule's unsigned coordinates and atom types, and edge features $\ve_{ij} \in \R^{d_{\text{edge}}}$ are computed from the unsigned differences between pairs of atomic coordinates. These features are passed through a Transformer backbone \citep{vaswani2017attention}. We use the PreLN layout with an adaptive version of LayerNorm \citep{dieleman2022continuous, dhariwal2021diffusion} that conditions on the timestep and molecule's moments. In addition, the attention module is replaced with a message-passing block that jointly updates the node and edge features:
\begin{align}
(\vv_{ij}, a_{ij}, \ve_{ij}') &\gets \mathrm{MLP}(\vh_i, \vh_j, \ve_{ij}), \quad \text{for all } i, j, \label{eq:messagemlp} \\
\ve_{ij} &\gets \ve_{ij} + \ve_{ij}', \quad  \text{for all } i, j,  \\
\vy_i &\gets \sum_{j = 1}^n \left(\frac{\exp(a_{ij})}{\sum_{k = 1}^n \exp(a_{ik})} \right) \vv_{ij}, \quad \text{for all } i,\\
\vh_{i} &\gets \vh_{i} + \mathrm{Linear}(\vy_i) \quad \text{for all } i.
\end{align}
The last two equations are reminiscent of self-attention in Transformers, and along the same lines, we use a multi-headed extension of them. Across all experiments, we use $d_{\text{node}} = 768$ and $d_{\text{edge}} = 192$. We use 16 Transformer-like blocks, 12 update heads, SiLU/Swish activations, and a $4\times$ expansion in each block's feed-forward module. In total, our model has 154M trainable parameters.

This architecture is distinct from the pretrained network of KREED \citep{cheng2024determining}, which was tailored for predicting 3D structure given molecular formula, moments of inertia, and \emph{substitution coordinates}. KREED was trained using random dropout of input substitution coordinates. For QM9, the model sometimes observed examples with no substitution coordinates during training. However, for GEOM, the model always received at least some substitution coordinates during training. Therefore, KREED is operating out-of-distribution when provided with no substitution coordinates at all on GEOM.

\begin{table}[!ht]
\caption{General training and sampling hyperparameters.}
 \label{tab:hparams}
 \centering
\begin{tabular}{llcc}
\toprule
& Hyperparameter & QM9 & GEOM \\
\midrule 
\multirow{9}{*}{Training}
& Epochs & 1000 &  60 \\
& Batch size per GPU  & 256 & 24 \\
& Optimizer & AdamW & AdamW \\ 
& Learning rate & $10^{-4}$ & $10^{-4}$ \\
& Learning rate warmup steps & 2000 & 2000 \\
& Weight decay & 0.01 & 0.01 \\ 
& Gradient clipping & yes & yes  \\
& EMA decay & 0.9995 & 0.9995 \\
\midrule 
\multirow{2}{*}{KREED}
& Timesteps & 1000 & 1000 \\ 
& Schedule & polynomial & polynomial \\ 
\midrule 
\multirow{1}{*}{Stiefel FM}
& Timesteps & 200 & 200 \\ 
\bottomrule
\end{tabular}
\end{table}

\begin{table}[!ht]
\caption{Training and sampling hyperparameters for \name{}.}
\label{tab:hparams-stiefel}
 \centering
\begin{tabular}{llccc}
\toprule
Dataset & Model & Timestep sampling & OT & stochasticity $\gamma$ \\
\midrule 
\multirow{5}{*}{QM9} & Stiefel FM & uniform & no & 0.00 \\
& Stiefel FM-OT & uniform & yes & 0.00 \\
& Stiefel FM-OT-stoch & uniform & yes & 0.10 \\
& Stiefel FM-ln & logit-normal & no & 0.00 \\
& Stiefel FM-ln-OT & logit-normal & yes & 0.00 \\
\midrule
\multirow{2}{*}{GEOM} & Stiefel FM & uniform & no & 0.00 \\
& Stiefel FM-OT & uniform & yes & 0.00 \\
\bottomrule
\end{tabular}
\end{table}

\begin{table}[h!]
\caption{Extended ablation study for QM9. Stochasticity and logit-normal timestep sampling do not provide a clear advantage.}
\label{tab:qm9-ablate}
\centering
\makebox[\textwidth][c]{
\centering
\begin{tabular}{l|cc|cccc|c}
\toprule

\multirow{2}{*}{Method} & \multicolumn{2}{c|}{\% $<$ RMSD $\uparrow$} & \multirow{2}{*}{Error $\downarrow$} & \multirow{2}{*}{Valid $\uparrow$} & \multirow{2}{*}{Stable $\downarrow$} & \multirow{2}{*}{Diverse $\uparrow$} & \multirow{2}{*}{NFE $\downarrow$} \\
& 0.25\,Å & 0.10\,Å & & & & \\

\midrule
Stiefel FM & \hfill \textbf{15.17 $\pm$ 0.31} & \hfill \textbf{13.82 $\pm$ 0.30} & 0.00 & 0.882 & $-$1.125 & 1.040 & 200 \\
Stiefel FM-OT & \hfill 13.99 $\pm$ 0.30 & \hfill 12.68 $\pm$ 0.29 & 0.00 & 0.835 & $-$1.039 & 1.045 & 200 \\
Stiefel FM-stoch & \hfill \textbf{15.13 $\pm$ 0.31} & \hfill \textbf{13.83 $\pm$ 0.30} & 0.00 & 0.877 & $-$1.116 & 1.045 & 500 \\
Stiefel FM-ln & \hfill \textbf{15.74 $\pm$ 0.32} & \hfill 11.45 $\pm$ 0.28 & 0.00 & 0.880 & $-$0.600 & 0.982 & 200 \\
Stiefel FM-ln-OT & \hfill 14.90 $\pm$ 0.31 & \hfill 12.45 $\pm$ 0.29 & 0.00 & 0.875 & $-$0.687 & 1.026 & 200 \\

\bottomrule
\end{tabular}
}
\end{table}

\begin{table}[h!]
\centering
\begin{tabular}{c|c|c|c|c}
\toprule
\multirow{2}{*}{Method} & \multirow{2}{*}{Dataset} & Training & Training & Sampling \\
& & (min / epoch) & (it / s) & (seconds / K=10 samples) \\

\midrule
KREED-XL               & QM9              & 1.02                           & 6.7                        & 13.9 \\                                       
Stiefel FM             & QM9              & 1.32                           & 5.1                        & 2.9                                         \\
Stiefel FM-OT          & QM9              & 3.48                           & 1.9                        & 2.9                                         \\
KREED-XL               & GEOM             & 225.6                          & 17.0                       & 71.3                                        \\
Stiefel FM             & GEOM             & 224.4                          & 17.1                       & 15.0                                        \\
Stiefel FM-OT          & GEOM             & 229.8                          & 16.7                       & 15.0                                        \\
\bottomrule
\end{tabular}
\caption{Training and sampling timing on QM9 and GEOM. A CPU bottleneck of computing logarithms and optimal transport exists for QM9 due to a large batch size of 256, but this CPU bottleneck disappears for GEOM due to a small batch size of 24.}
\label{tab:performance}
\end{table}

\subsection{Stochasticity} \label{subsec:stochastic}

Diversity is important for identifying unknown molecules outside the training set.
Therefore, we experiment with adding stochasticity to the dynamics of the flow, leading individual paths to be stochastic.
To do so during training, we apply an exponential map to a Gaussian variable with noise scale $\gamma \cdot \frac{\cos(\pi t)+1}{2}$ in the tangent space of the interpolant before calculating $\dot{\mU}$ again.
During sampling, we add Gaussian noise of the same noise schedule to the tangent vector at every time step.

\subsection{Training} \label{subsec:training}

Models were trained on 4 NVIDIA A100 40GB GPUs.  
Tables \ref{tab:hparams} and \ref{tab:hparams-stiefel} gives our hyperparameters. We use the adaptive gradient clipping strategy from \cite{hoogeboom2022equivariant}.

\subsection{Sampling}
\label{subsec:sampling}

During sampling, we sample uniformly from $\Manif$ and iteratively query the trained model for a tangent vector $\Delta$ at every step.
If stochasticity is turned on, Gaussian noise with scale $\gamma \cdot \frac{\cos(\pi t)+1}{2}$ is added to this tangent vector.
At every step, $\mU_t$ is projected onto the manifold (Appendix \ref{app:orthoproject}) before projecting the tangent vector to the tangent space on the manifold $\tanspace{\mU_t}{\Stiefel{n}{4}}$ (\Cref{thm:project}).
Integration proceeds by applying $\exp_{\mU_t}$ to this tangent vector, scaled by $dt$.

\begin{algorithm}[H]
\caption{Sampling under \name{}.}
\label{alg:sampling}
\begin{algorithmic}[1]
    \Require{Size $n$, atom types $\va$, moments $P_{XYZ} = (P_X, P_Y, P_Z)$, stochasticity $\gamma$, timesteps $T$}
    \State $\mU \sim \mathrm{Uniform}(\Stiefel{n}{4})$ using Appendix \ref{app:sampling}
    \State Project $\mU$ to $\Manif$ (defined in \autoref{eq:manif}) using Appendix \ref{app:rodrigues}
    \State $t \gets 0$
    \State $t_\Delta \gets 1 / T$
    \For{step in $1, \ldots, T$}
        \State $\mX \gets \mX(\mU, \vm, P_{XYZ})$ by inverting \autoref{eq:U}
        \State $\Delta \gets v_\theta(t, \mX, P_{XYZ}, \va) \in \R^{n \times 3}$
        \If{$\mathrm{step} < T$}
            \State $\Delta \gets t_\Delta \Delta + \frac{1}{2}\gamma \sqrt{t_\Delta}(\cos (\pi t)  + 1) \bm{\varepsilon}$, where $\varepsilon_{ij} \sim \mathcal{N}(0, 1)$
        \Else
            \State $\Delta \gets t_\Delta \Delta$
        \EndIf
        \State Project $\Delta$ to $\tanspace{\mU}{\Manif}$ using \autoref{thm:project}
        \State $\mU \gets \exp_\mU(\Delta)$
        \State $t \gets t + t_\Delta $
    \EndFor 
    \Return $\mX \gets \mX(\mU, \vm, P_{XYZ})$ by inverting \autoref{eq:U}
\end{algorithmic}
\end{algorithm}

\newpage

\begin{figure}[H]
    \centering
     \includegraphics[width=.85\textwidth]{figures/selected_qm9_examples.pdf}
    \caption{Selected QM9 examples. Best viewed zoomed in. Examples are sorted by RMSD to ground truth, which is shown in cyan. Green panels indicate meeting the threshold of 0.25 Å.}
    \label{fig:select-qm9}
\end{figure}

\begin{figure}[H]
    \centering
     \includegraphics[width=.85\textwidth]{figures/selected_geom_examples.pdf}
    \caption{Selected GEOM examples. Best viewed zoomed in. Examples are sorted by RMSD to ground truth, which is shown in cyan. Green panels indicate meeting the threshold of 0.25 Å.}
    \label{fig:select-geom}
\end{figure}

\begin{figure}[H]
    \centering
     \includegraphics[width=.49\textwidth]{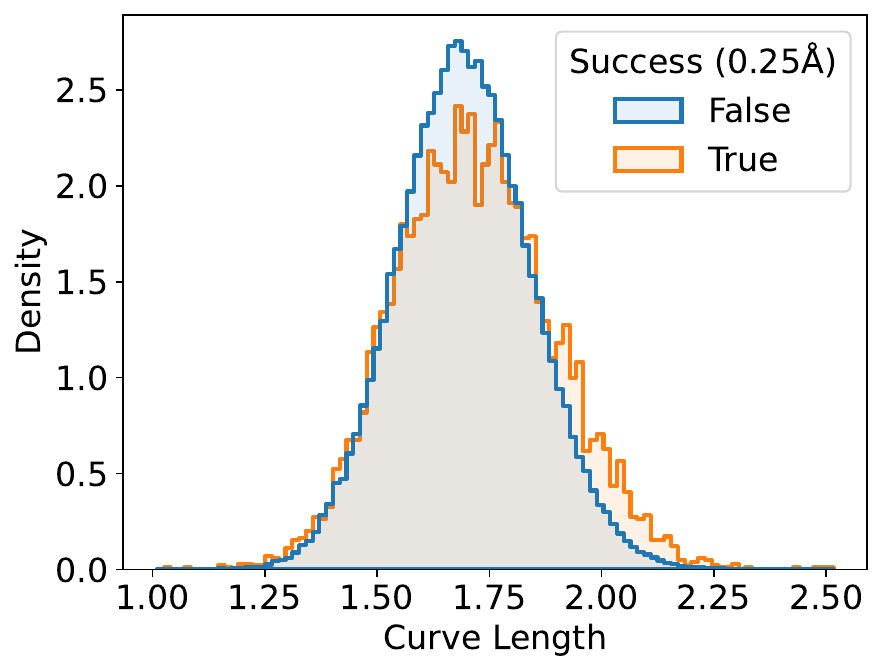}
     \includegraphics[width=.49\textwidth]{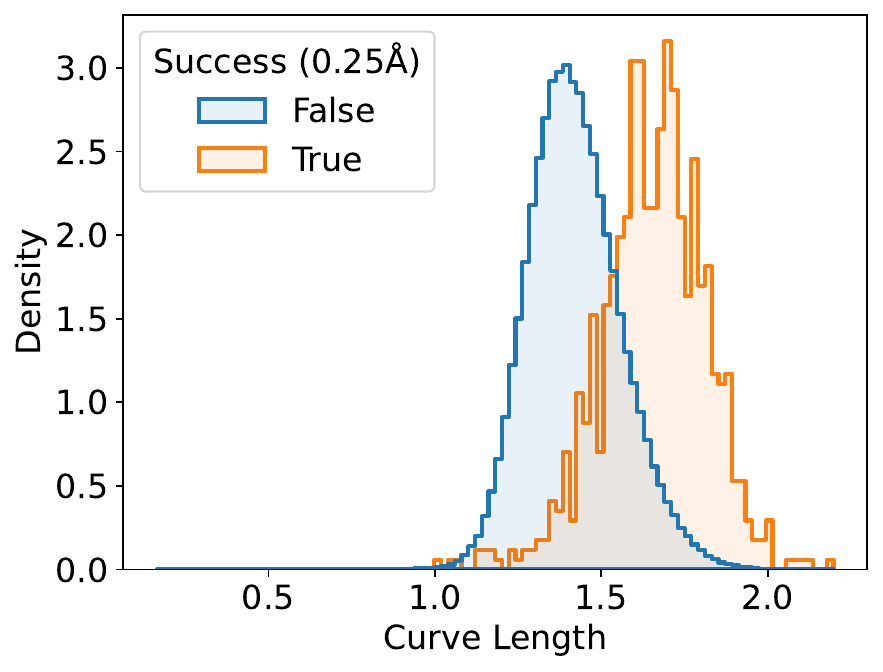}
     
    \caption{Normalized histogram of curve lengths of success and failure examples of Stiefel FM on the QM9 (left) and GEOM (right) datasets. Success cases are more often the result of longer generation paths. }
    \label{fig:biased}
\end{figure}

\begin{figure}[H]
    \centering
    \includegraphics[width=0.7\textwidth]{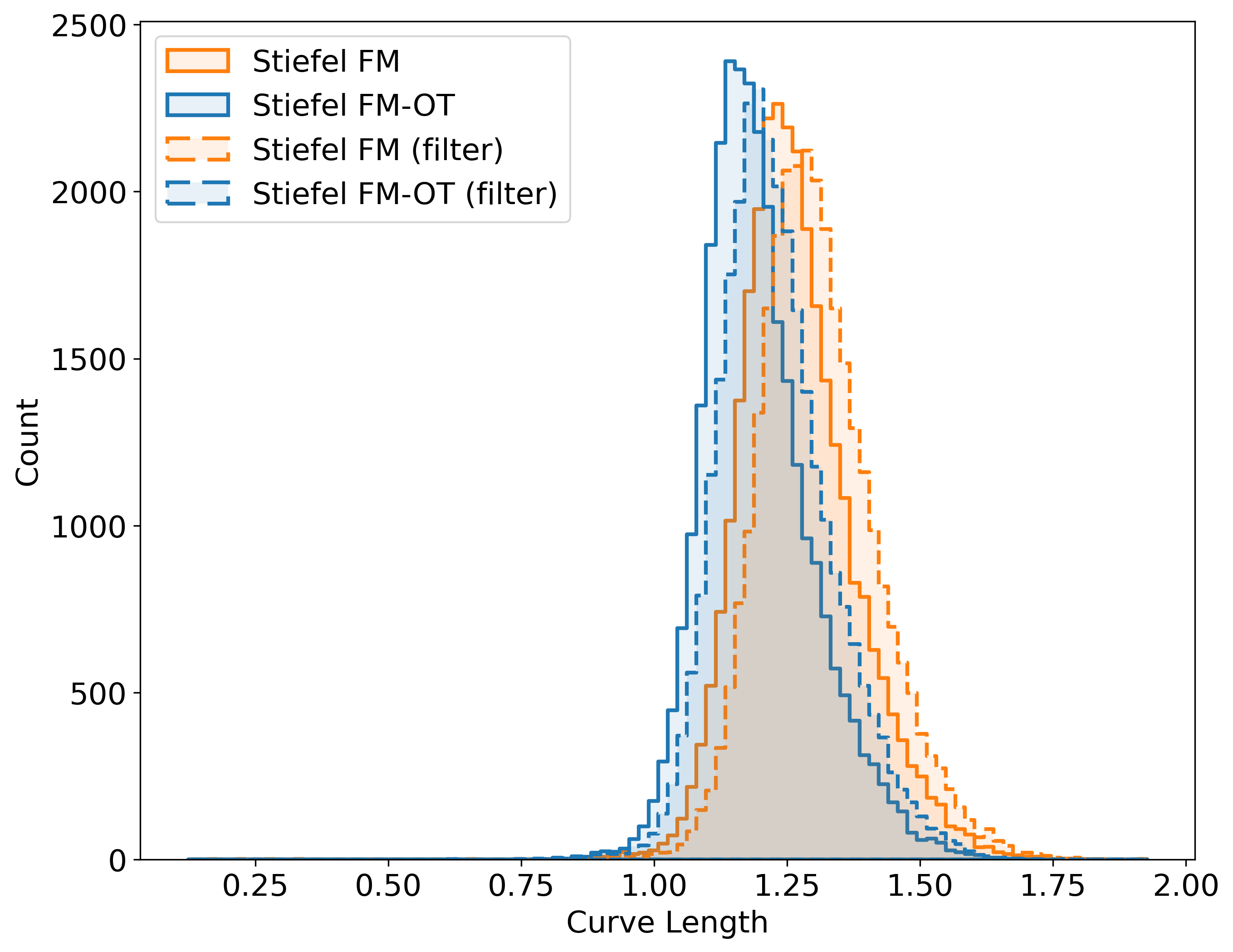}
    \caption{Equivariant optimal transport shortens generation trajectories for GEOM. However, generating more samples and filtering by validity has a slight bias towards \emph{longer} generation trajectories.}
    \label{fig:geom-pathlength}
\end{figure}

\section{Greedy random optimal assignment algorithm} \label{app:assgnalg}

The greedy random local search looks for the best reflection and permutation to minimize the Stiefel distance between a given $\mU_0$ and $\mU_1$, $d(\mU_0, \mU_1) = ||\log_{\mPi\mU_0\mR} (\mU_1)||_{\mPi\mU_0\mR}$.
This distance is approximated using only one iteration of the logarithm.
It first decides on the best reflection $\mR$ by trying many random permutations for each reflection.
Then, the best reflection is kept, and the permutation is optimized using random swaps of indices.
The computational cost of computing the optimal transport map is $O(np^2)$, as it relies on a fixed number of computations of the logarithm.

\begin{algorithm}[H]
\caption{Heuristic alignment algorithm.}
\label{alg:alignment}
\begin{algorithmic}[1]
    \Require{$\mU_0, \mU_1 \in \Stiefel{n}{p}$, atom types $\va$, number of restarts $R$, local search budget $L$}
    \State $c^* \gets \infty$ \Comment{lowest cost seen so far}
    \For{reflections $\mR \in \{\diag(\vs) \mid \vs \in \{-1, +1\}^3 \}$} \Comment{find a good reflection}
        \For{$k = 0, 1, \ldots, R$}
            \State sample a random atom-type-preserving node permutation $\mPi$
            \State $\mU_{\textrm{cand}} \gets \mPi\mU_0\mR^\top$
            \State $c \gets \tilde{d}(\mU_{\textrm{cand}}, \mU_1)$, \Comment{approximate distance}
            \If{$c < c^*$}
                \State $c^* \gets c$
                \State $\mU_{\textrm{best}} \gets \mU_{\textrm{cand}}$
            \EndIf
        \EndFor
    \EndFor

    \For{$k = 0, 1, \ldots,  L$} \Comment{local search over node swaps}
        \State sample $(i,j)$ such that $\va_i = \va_j$, without repeating pairs
        \State $\mU_{\textrm{cand}} \gets \textrm{swap}(\mU_{\textrm{best}}, i, j)$
        \State $c \gets \tilde{d}(\mU_{\textrm{cand}}, \mU_1)$, \Comment{approximate distance}
        \If{$c < c^*$}
            \State $c^* \gets c$
            \State $\mU_{\textrm{best}} \gets \mU_{\textrm{cand}}$
        \EndIf
    \EndFor
    \State \Return $\mU_{\textrm{best}}$
\end{algorithmic}
\end{algorithm}

\begin{figure}[H]
    \centering
    \includegraphics[width=0.9\textwidth]{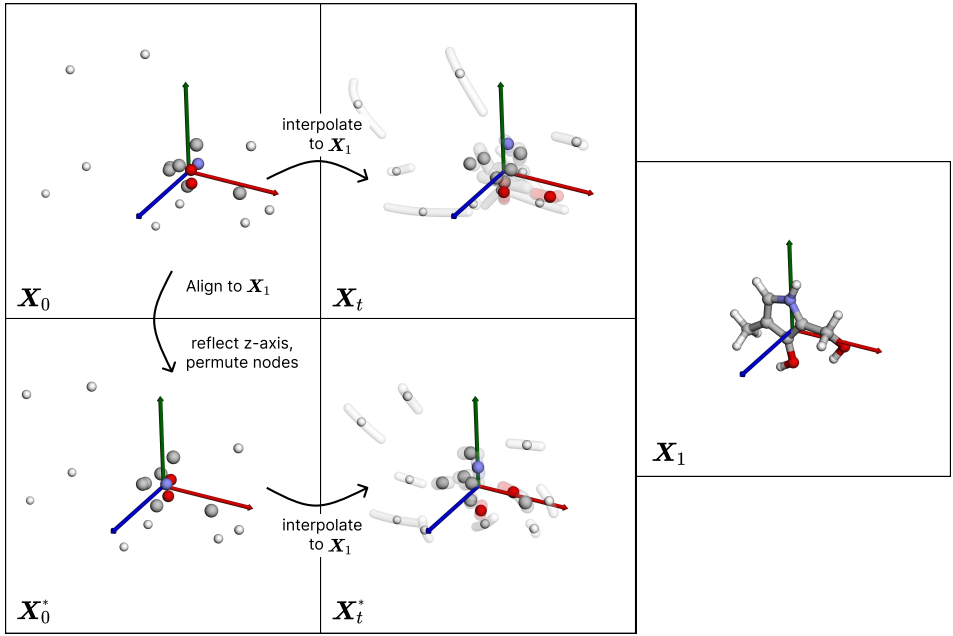}
    \caption{Equivariant optimal transport aligns noise samples $\mX_0$ to training examples $\mX_1$ over reflections and permutations, leading to smoother and shorter paths shown to the model during training.}
    \label{fig:equivariant-OT}
\end{figure}

\section{Stiefel logarithm empirical analysis}
\label{sec:empirical-log}

We empirically analyze the convergence of the Stiefel logarithm (\Cref{alg:stiefellog}).
We sample 100k training examples from QM9 or GEOM to be used as $\mU_1$ and for each example sample one random point $\mU_0$. We compute the true logarithm $\Delta = \log_{\mU_0}(\mU_1)$ using a large number of iterations and compare it to the 20-iteration truncated logarithm. We record error as the infinity norm of the difference between the true and approximate logarithms. We set the convergence threshold to be 1e-6. 

For QM9, the 20-iteration logarithm converges 97.8\% of the time (median 9 iterations to converge), and the median error in case of nonconvergence is 2.5e-4. The Spearman correlation between the 1-iteration approximate distance and the true distance is $\rho=0.87$.
For GEOM, the 20-iteration logarithm converges 99.7\% of the time (median 9 iterations to converge), and the median error in case of nonconvergence is 8.8e-5. The Spearman correlation between the 1-iteration approximate distance and the true distance is $\rho=0.83$.

These results empirically validate that the 1-iteration approximate distance used in Algorithm \ref{alg:alignment} is an upper bound on the true distance, and that it is a valid heuristic which generally maintains the same relative ordering as the true Stiefel distance.

\begin{figure}[H]
    \centering
     \includegraphics[width=.49\textwidth]{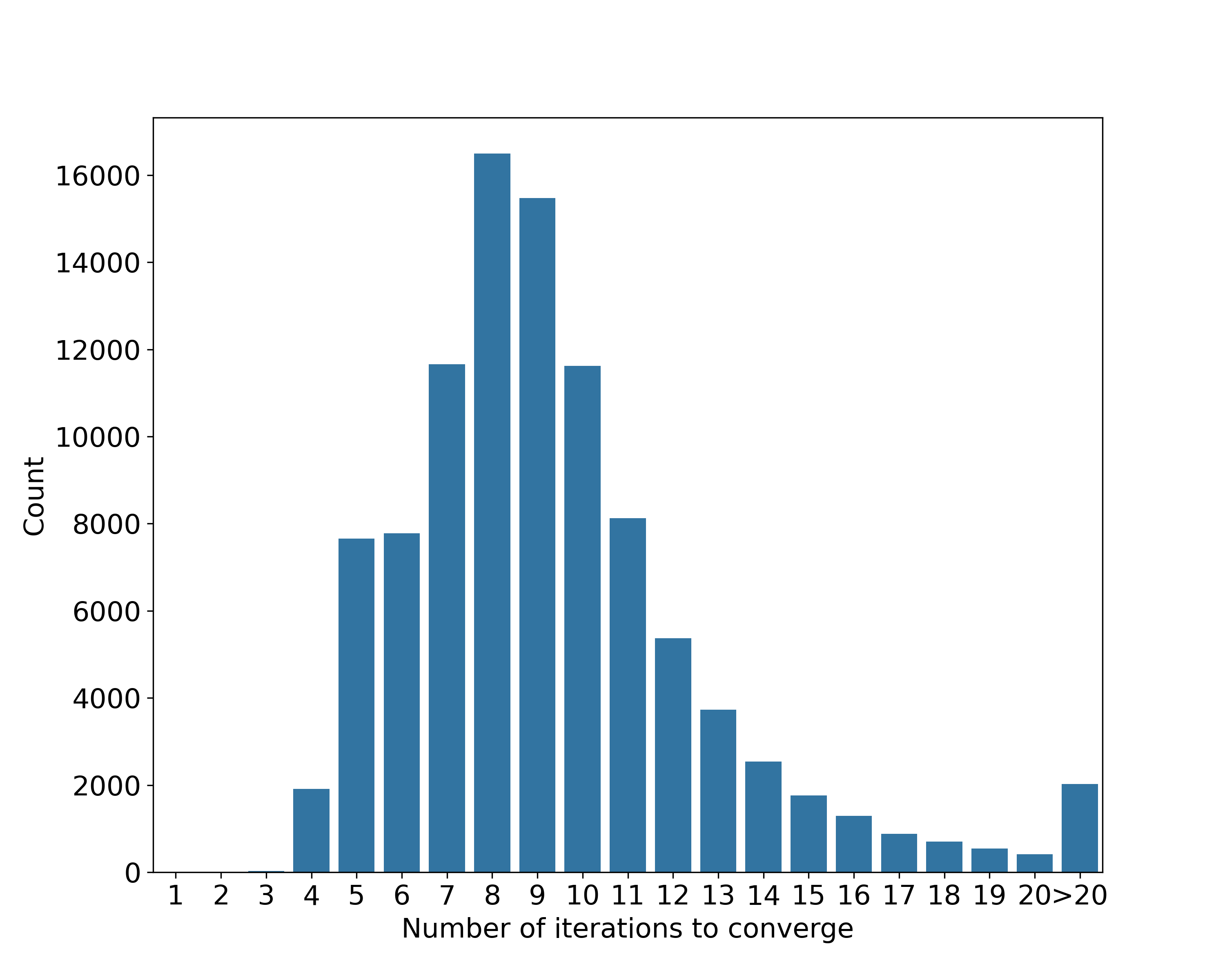}
     \includegraphics[width=.49\textwidth]{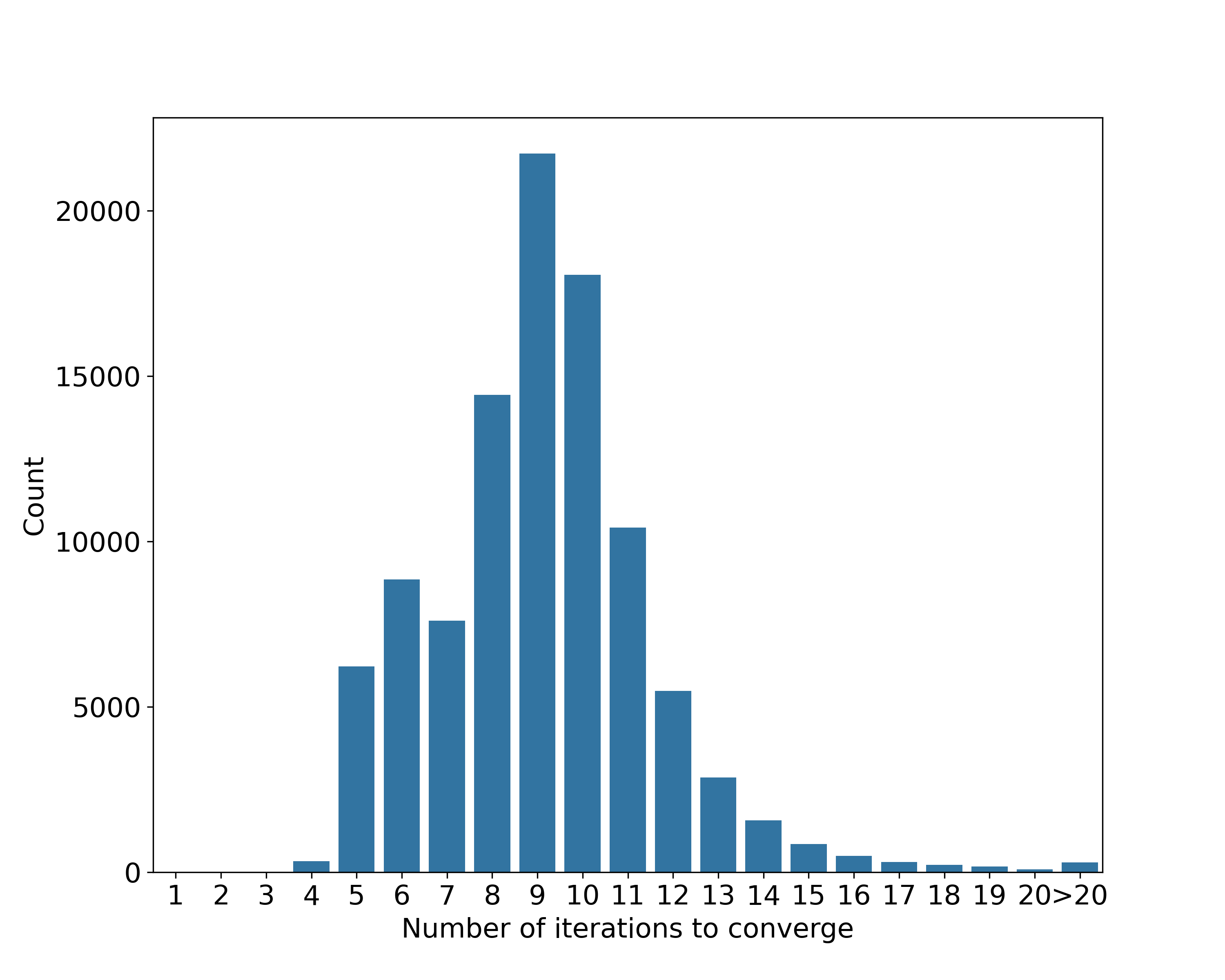}
    \caption{Histograms of the number of inner iterations of the Stiefel logarithm required to converge to an error of 1e-6 for 100k random training examples of QM9 (left) and GEOM (right).}
    \label{fig:log-converge}
\end{figure}

\begin{figure}[H]
    \centering
     \includegraphics[width=.49\textwidth]{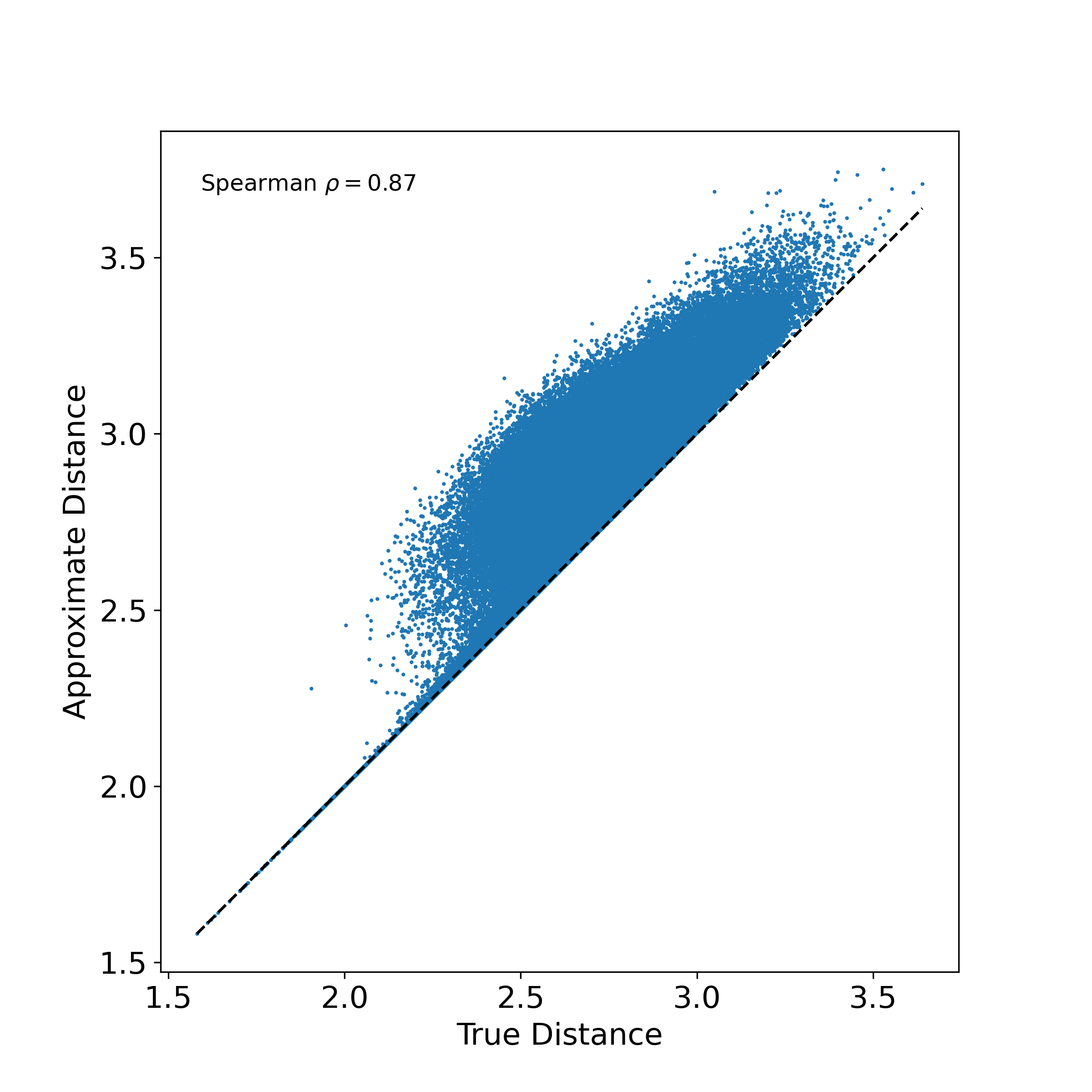}
     \includegraphics[width=.49\textwidth]{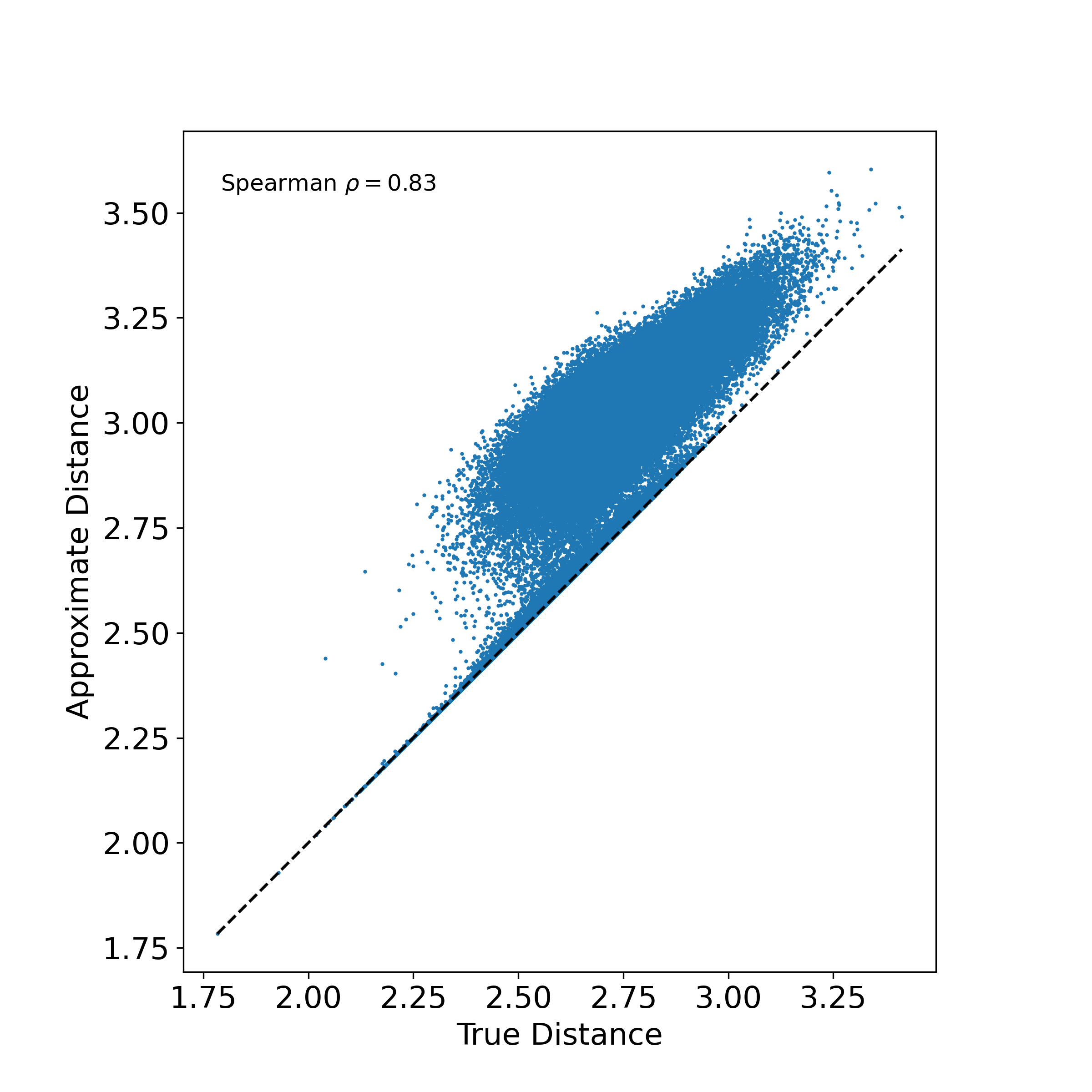}
    \caption{Parity plots comparing the 1-iteration approximate Stiefel distance to the true Stiefel distance of QM9 (left) and GEOM (right).}
    \label{fig:approx-dist}
\end{figure}

\end{document}